\newtheorem{property}{Property}
\newtheorem{problem}{Problem}
\newtheorem{fact}{Fact}
\renewenvironment{proof}[1][\proofname]{{\it\bfseries #1.}}{\qed}
\definecolor{dgreen}{rgb}{0.0, 0.35, 0.0}
\newif\ifshownotes
\renewcommand{\cite}{\citep}
\newcommand{\para}[1]{\noindent{\bf #1}}
\newcommand{\scaler}[1]{\ensuremath{{\tilde\Sigma^{(#1)}}}\xspace}
\newcommand{\scaledvs}{\ensuremath{X}\xspace}
\newcommand{\topsigma}{\ensuremath{\sigma_\omega}}
\newcommand{\botsigma}{\ensuremath{\sigma_{\mu}}}
\newcommand{\condn}{\ensuremath{\eta}}
\newcommand{\range}{r}
\newcommand{\cssp}{{\sc CSS}\xspace}
\newcommand{\gcssp}{{\sc GCSS}\xspace}
\newcommand{\sparsecca}{{\sc SparseCCA}\xspace}
\newcommand{\greedy}{{\sc Greedy}\xspace}
\newcommand{\gls}{{\sc GLS}\xspace}
\newcommand{\wmiA}{\ensuremath{T}\xspace}
\newcommand{\wmiB}{\ensuremath{P}\xspace}
\newcommand{\wmiU}{\ensuremath{Y}\xspace}
\newcommand{\wmiV}{\ensuremath{Z}\xspace}
\theoremstyle{plain}
\newtheorem{theorem}{Theorem}[section]
\newtheorem{lemma}[theorem]{Lemma}
\newtheorem{corollary}[theorem]{Corollary}
\theoremstyle{definition}
\newtheorem{definition}[theorem]{Definition}
\theoremstyle{remark}
\icmltitlerunning{Generalized Leverage Scores: Geometric Interpretation and Applications}
\begin{document}

\twocolumn[
\icmltitle{Generalized Leverage Scores: Geometric Interpretation and Applications}

% It is OKAY to include author information, even for blind
% submissions: the style file will automatically remove it for you
% unless you've provided the [accepted] option to the icml2022
% package.

% List of affiliations: The first argument should be a (short)
% identifier you will use later to specify author affiliations
% Academic affiliations should list Department, University, City, Region, Country
% Industry affiliations should list Company, City, Region, Country

% You can specify symbols, otherwise they are numbered in order.
% Ideally, you should not use this facility. Affiliations will be numbered
% in order of appearance and this is the preferred way.
\icmlsetsymbol{equal}{*}

\begin{icmlauthorlist}
\icmlauthor{Bruno Ordozgoiti}{qmul}
\icmlauthor{Antonis Matakos}{aalto}
\icmlauthor{Aristides Gionis}{kth}
\end{icmlauthorlist}

\icmlaffiliation{qmul}{School of Electronic Engineering and Computer Science, Queen Mary University of London, United Kingdom}
\icmlaffiliation{aalto}{Deparment of Computer Science, Aalto University, Finland}
\icmlaffiliation{kth}{Division of Theoretical Computer Science, KTH Royal Institute of Technology, Sweden}

\icmlcorrespondingauthor{Bruno Ordozgoiti}{b.[lastname]@qmul.ac.uk}

% You may provide any keywords that you
% find helpful for describing your paper; these are used to populate
% the "keywords" metadata in the PDF but will not be shown in the document
\icmlkeywords{Column Subset Selection, RandNLA, Leverage Scores, Matrix Computations, Numerical Linear Algebra}

\vskip 0.3in
]

% this must go after the closing bracket ] following \twocolumn[ ...

% This command actually creates the footnote in the first column
% listing the affiliations and the copyright notice.
% The command takes one argument, which is text to display at the start of the footnote.
% The \icmlEqualContribution command is standard text for equal contribution.
% Remove it (just {}) if you do not need this facility.

\printAffiliationsAndNotice{}  % leave blank if no need to mention equal contribution
%\printAffiliationsAndNotice{\icmlEqualContribution} % otherwise use the standard text.

\begin{abstract}
In problems involving matrix computations, the concept of \emph{leverage} has found a large number of applications. In particular, \emph{leverage scores}, which relate the columns of a matrix to the subspaces spanned by its leading singular vectors, are helpful in revealing column subsets to approximately factorize a matrix with quality guarantees. As such, they provide a solid foundation for a variety of machine-learning methods. 
In this paper we extend the definition of leverage scores to relate the columns of a matrix to arbitrary subsets of singular vectors. We establish a precise connection between column and sin\-gular-vector subsets, by relating the concepts of leverage scores and principal angles between subspaces. We employ this result to design approximation algorithms with provable guarantees for two well-known problems: 
\emph{generalized column subset selection} and \emph{sparse canonical correlation analysis}. We run numerical experiments to provide further insight on the proposed methods. 
The novel bounds we derive improve our understanding of fundamental concepts in matrix approximations. In addition, our insights may serve as building blocks for further contributions.
\end{abstract}

\section{Introduction}
When dealing with data in matrix form, it is often useful to find
compact low-dimensional approximations. One way to find such an approximation
is by computing a singular value decomposition (SVD), which offers a 
representation of a matrix in terms of a set of linearly-independent factors,
conveniently sorted in order of importance. 
The SVD is optimal the following sense: for any constant $k$, it reveals
the best rank-$k$ approximation of a matrix, as measured by a
family of matrix norms.

A drawback of SVD is that the resulting factors do not correspond directly to 
the rows or columns of the input matrix and lack an intuitive interpretation. 
To address this issue, many works in the literature
seek approximate factorizations in terms of elements of the matrix: 
instead of settling for a set of abstract factors, like those offered by the SVD, we aim to find a
subset of rows or columns that are particularly representative of the whole matrix.

\iffalse
A drawback of the SVD is that the resulting factors are abstract. Say,
for example, that we are dealing with medical data, where each matrix
row corresponds to a patient's records, and each column contains a measurement of a variable
of interest, such as age, weight, blood sugar level, blood pressure, etc. Each factor --- left
singular vector --- provided by the SVD will be defined, in general, as
a linear combination of all these variables. Its meaning, thus, will
often be unclear.

A way around this is to seek approximate factorizations in terms of
elements of the matrix. Instead of settling for a set of
abstract factors, like those offered by the SVD, we aim to find a
subset of rows or columns that are particularly representative of the whole matrix. For
instance, in our previous example concerning medical data, it might be
the case that age and blood sugar level alone provide a good
approximate summary of the rest of the variables. This result would be
convenient for a practitioner, as the approximation would be defined
in terms of variables whose meaning they understand.
\fi

%\spara{Column subset selection and the leverage scores.}
Computing an SVD is a tractable problem and has been the subject of
extensive research. Thus, efficient methods exist to obtain 
an SVD to any desired level of precision~\cite{golub1996matrix}. 
When we wish to
select a subset of matrix elements instead, the task often becomes
computationally hard
under various natural objectives~\cite{civril2014column,shitov2017column}.
Therefore, research in this area has focused on finding efficient approximation
algorithms.

An example of a more intuitive matrix approximation is the 
\emph{column subset selection problem}~\cite{deshpande2006matrix,boutsidis2009improved},
which, given a matrix and a number~$k$, asks for the $k$ best columns, in the following sense:

\begin{problem}{\emph{Column subset selection (\cssp)}.}\label{problem:cssp}
Given a matrix $A \in \mathbb{R}^{m\times n}$ and a positive integer
$k$ smaller than the rank of~$A$, find a matrix $C$ comprised of $k$
columns of $A$ to minimize
\begin{equation}
\label{eq:cssp}
\|A-CC^+A\|_F^2,
\end{equation}
where $C^+$ is the Moore-Penrose pseudoinverse of
$C$.
\end{problem}
The matrix $CC^+A$ is the best Frobenius-norm approximation of $A$ in the column-space
of $C$ and is efficiently computable. 
% Thus, the challenge is in choosing the $k$ columns that make up $C$. 
It is easy to verify that the problem of maximizing $\|CC^+A\|_F^2$ is equivalent to Problem~\ref{problem:cssp},
 meaning that
the set of optimal solutions remain the same. 

\para{Leverage scores and good column subsets.} 
In finding good column subsets for the \cssp problem, 
\textit{leverage scores} have proved very useful. 
These scores --- to be precisely defined later --- relate the columns of a matrix to the
subspace spanned by its top singular vectors. Thus, they might reveal
whether a column subset is particularly representative. 
Leverage scores, and variants thereof, have been employed to design approximation algorithms 
for \cssp and related problems~\cite{drineas2006sampling,drineas2008relative,mahoney2009cur,boutsidis2009improved,papailiopoulos2014provable}. We give a more detailed overview of related work in the appendix.
Leverage scores can be traced back to the concept of \textit{statistical leverage}, 
long employed in statistics and the analysis of linear regression~\cite{chatterjee1986influential}. 

The techniques based on leverage scores, however, do not apply to
other matrix-approximation problems, such as the following
generalization of \cssp (in maximization form):

\begin{problem}{\emph{Generalized column subset selection (\gcssp)}.}
  \label{problem:gcssp}
Given two matrices $A\in \mathbb{R}^{m\times n}$, $B\in \mathbb{R}^{m\times p}$ and a positive integer
$k$ smaller than the rank of $A$, find a matrix $C$ comprised of $k$
columns of $A$ to maximize
\begin{equation}
\label{eq:gcssp}
\|CC^+B\|_F^2.
\end{equation}
\end{problem}

The reason that leverage scores are no longer helpful is that they
relate the columns of $A$ to its top-$k$ singular vectors, and these,
in turn, may not provide a good approximation of~$B$. 
The following question arises naturally: 
\emph{is there useful information elsewhere among the
singular subspaces of $A$?}

We
will show that, indeed, such information can be drawn from subspaces
spanned by arbitrary sets of singular vectors. This information, in
the form of leverage scores, will be useful to tackle problems like 
\gcssp. 

%Much less has been written about \gcssp than \cssp, a notable
%exception being the analysis of the greedy
%algorithm~\cite{bhaskara2016greedy}. 

\subsection{Our Contributions.}
In this paper we introduce the concept of \textit{generalized leverage scores}, 
which relate the columns of a matrix to the subspace
spanned by an arbitrary subset of its singular vectors. 
Our definition, which provides a natural extension of leverage scores, 
can be employed to obtain approximation algorithms for a
variety of problems. Furthermore, the analysis leading to these
algorithms yields insightful results of independent interest, relating
leverage scores and the concept of angles between subspaces. 
In more detail, we make the following contributions.
% We relate the precise nature of our contributions next: 

\para{Generalized leverage scores and connections to principal angles:}
We introduce the concept of generalized leverage scores,
defined in terms of an arbitrary set of singular vectors, 
as contrasted to the standard leverage scores, 
which are defined with respect to the leading singular vectors. 
We show how this generalization enables the
application of leverage-score-based techniques to a new array of problems, 
leading to algorithms with approximation guarantees.

The cornerstone of these results is a novel inequality that gives a geometric interpretation of the generalized leverage scores, by relating them
to the principal angles between matrix columns and singular vectors.
The special case of standard leverage scores,
which is of independent interest, 
will be treated separately and proved using different arguments. 
We believe that our results, which relate two fundamental quantities, 
solidify our understanding of how leverage scores connect matrix columns and singular vectors.

\para{Applications:} We showcase the applicability
of the generalized leverage scores by providing approximation algorithms for two
well-known problems:

% \begin{itemize}
% \item 
For \gcssp (Problem~\ref{problem:gcssp}), we show that choosing singular
vectors and generalized leverage scores to cover a
$(1-\epsilon)$-factor of readily accessible quantities, we find a
column submatrix $C$ of $A$ such that $\|CC^+B\|_F^2 \geq
(1-\epsilon)^2\|B\|_F^2$. This is akin to a known result
 for \cssp~\cite{papailiopoulos2014provable}, and complementary to
the only other known bound --- to our knowledge --- for \gcssp~\cite{bhaskara2016greedy}.
In contrast to the result of~\citet{bhaskara2016greedy}, the number of
columns chosen by our algorithm does not depend on the smallest singular value of the
optimal subset, which is unknown. Instead, it depends on the decay
of the generalized leverage scores, which is trivially computable. 
In particular, if the scores follow a power-law
decay, $\mathcal{O}\!\left(\frac{\topsigma^2 k}{\botsigma^2\epsilon}\right)$ columns suffice, 
where $\topsigma,\botsigma$ are a choice of singular values that
depends on the approximation constant $\epsilon$ --- the user, thus,
has control over this ratio.

% \item 
For sparse canonical correlation analysis (\sparsecca) \cite{hardoon2011sparse,uurtio2017tutorial} 
we give similar results (see Section~\ref{sec:sparsecca}). 
Given input matrices $A$ and $B$, we use a similar criterion as before to find column subsets of both matrices whose
canonical correlations add up to a $(1-\epsilon)^4$ factor of the
total canonical correlations between $A$ and $B$. 
We are not aware of other algorithms
that give guarantees for \sparsecca in these terms.

% \end{itemize}

The cost of the proposed algorithms is dominated by
the task of finding an SVD. Thus, our approaches can take full
advantage of the plethora of existing techniques for this purpose, as
well as fast approximate methods.

\iffalse
Given the fundamental nature of the concept of generalized leverage
scores and our technical results, combined with the numerous uses of
standard leverage scores in the literature, we believe our
contributions are likely to be taken further by the community in
subsequent work. 
\fi 

%\spara{Our techniques:} 
\iffalse
For the former, the proof will boil down to analysing the effect of
row-scaling operations on the diagonal elements of a projection
matrix, or the ``hat matrix'', as it is known in statistics. While
tracking the effect of these operations is challenging when performed
simultaneously, we are able to reach the desired result by breaking
them down into a carefully designed sequence, combined with the
properties of projection matrices.

\fi

\section{Preliminaries}

We will use upper and lowercase letters for matrices and vectors,
respectively, as in $A$ and $x$. We will also use uppercase letters
for sets. Context will be sufficient to tell sets and matrices apart.
For a matrix $A$ we write $A_{i,:}$ to denote its $i$-th row.
We write $A^+$ to denote the pseudoinverse of~$A$.

As in other works involving leverage scores, the singular value
decomposition will make frequent appearances in this paper.
We use the subindex $k$, as in $U_k$ and $V_k$, to denote the ``truncated''
submatrices of a singular value decomposition, obtained by retaining the
first $k$ singular vectors only. Analogously, given a set $R$ of natural numbers,
$U_R$ and $V_R$ will denote the matrices obtained by retaining the singular vectors indexed by $R$. 
For instance, note that $U_{k}=U_{\{1, \dots, k\}}$.

Chief among our cast of characters are the \textit{leverage scores}.
%\begin{definition}
%  Given a matrix $A$, the statistical leverage of the $i$-th row of $A$, denoted $\ell_i(A)$, is the $i$-th diagonal element of $AA^+$.
%\end{definition}

\begin{definition}
  Given a matrix $A$ with singular value decomposition $A=U\Sigma
  V^T$, the rank-$k$ leverage score of the $i$-th column of $A$ is
  defined as $\|(V_k)_{i,:}\|_2^2$. 
\end{definition}

Given a matrix $A$, we will denote the $i$-th diagonal element of
$AA^+$ as $\ell_i(A)$. This quantity is sometimes known in the statistics
literature as \textit{statistical leverage} of the $i$-th row of~$A$, 
and as we will see, it is connected to the leverage scores. Even though this choice
of nomenclature may occasionally lead to confusion, we retain it for
historical reasons. We ask the reader to remark the distinction in
this text between \textit{statistical leverages} and \textit{leverage
  scores}.

Also relevant is the concept of \textit{principal angles} between
subspaces, defined as
follows~\cite{bjorck1973numerical}:
\begin{definition}
Let $F$ and $G$ be subspaces of $\mathbb R^n$. Let the columns of matrices
$Q$ and $T$ be orthonormal bases of $F$ and~$G$, respectively. 
The principal angles between $F$ and $G$ are 
the angles whose cosines are the singular values of~$Q^TT$.
\end{definition}
We will often speak of angles between two matrices to mean the angles
between the subspaces spanned by their columns.

We make use of projection matrices and their
properties.
\begin{definition}
  A matrix $H$ is said to be a projection matrix (or projector) if $H^2=H$.
\end{definition}
For any projection matrix $H$, we have:
\begin{property}
  \label{proph_norms}
  $\sum_{j} H_{ij}^2 = H_{ii}$.
\end{property}
\begin{property}
  \label{proph_invariant}
  $H$ is invariant with respect to the basis of the space onto which it projects.
\end{property}
\begin{property}
  \label{proph_trace}
  $tr(H)=r$, where $r$ is the dimension of the space onto which $H$ projects.
\end{property}
% We will make extensive use of these properties throughout the text.

\section{Leverage Scores and the Top-$k$ Singular Subspace}
\label{sec:caseuk}

%\todo{Adapt the notation of this section so it matches the one in the
% next. Specifically what concerns $\ell$ and $H$.}
In this section we present our first result, which relates the two
fundamental quantities defined in the previous section: 
the leverage scores and the principal angles between subspaces.
Even though this is a special case of the result in
Section~\ref{sec:caseur}, we treat it separately for two reasons:
first, it is of independent interest and, to the best of our knowledge, novel; 
second, the proof techniques will be useful --- but insufficient --- 
for the more general results of Section~\ref{sec:caseur}, 
so this case serves as a natural, more accessible, preamble.

%Its proof will serve as a warm-up for our main result,
%presented in Section~\ref{sec:caseur}.

To state our result, we will rely on two easily verified facts, given below. 
Here, we consider matrices $A=U\Sigma V^T$ and $C=AS$, 
where $S$ is a column selection matrix (binary matrix with unit-norm columns), 
and thus $C$ is comprised of a column subset of $A$. 
\begin{fact}
\label{fact:1}
The sum of the leverage scores of the columns of $C=AS$ is equal to $\|V_k^TS\|_F^2$.
\end{fact}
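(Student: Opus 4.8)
The plan is simply to unwind the definitions: the claimed identity amounts to the observation that right-multiplication by the column-selection matrix $S$ picks out the corresponding rows of $V_k$, so no deep argument is needed beyond careful bookkeeping.

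First I would fix notation for $S$. Since $S \in \mathbb{R}^{n \times k}$ is binary with unit-norm columns, its $j$-th column is a standard basis vector $e_{s(j)}$ for some index $s(j) \in \{1, \dots, n\}$, and therefore the $j$-th column of $C = AS$ is exactly the $s(j)$-th column of $A$. By the Definition of leverage scores, the rank-$k$ leverage score of that column — inherited from $A$, i.e., read off from the SVD $A = U\Sigma V^T$ — equals $\|(V_k)_{s(j),:}\|_2^2$. Hence the sum of the leverage scores of the columns of $C$ is $\sum_{j=1}^{k} \|(V_k)_{s(j),:}\|_2^2$.

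Next I would identify this sum with the target Frobenius norm. Note that $S^T V_k$ is the matrix whose $j$-th row is $e_{s(j)}^T V_k = (V_k)_{s(j),:}$; that is, $S^T V_k$ stacks the selected rows of $V_k$. Consequently $\|S^T V_k\|_F^2 = \sum_{j=1}^{k} \|(V_k)_{s(j),:}\|_2^2$, which is precisely the quantity computed in the previous step. Finally, since the Frobenius norm is invariant under transposition, $\|S^T V_k\|_F^2 = \|(S^T V_k)^T\|_F^2 = \|V_k^T S\|_F^2$, and the claim follows.

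There is no genuine obstacle here; the only point requiring care is the reading of ``leverage scores of the columns of $C$'': these are the rank-$k$ leverage scores with respect to the SVD of $A$ (each column of $C$ being a column of $A$), not leverage scores computed from an SVD of $C$ itself. If one allowed $S$ to select a column more than once, the same argument goes through verbatim with both sides counting multiplicities. One could alternatively phrase matters projection-theoretically, using $\|(V_k)_{i,:}\|_2^2 = (V_k V_k^T)_{ii}$, but that is not needed for this particular fact.
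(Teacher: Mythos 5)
Your proof is correct and matches the paper's intent: the paper offers no explicit verification of Fact~\ref{fact:1}, treating it as an immediate consequence of the definitions, and your unwinding — $S^TV_k$ stacks the selected rows of $V_k$, so $\|V_k^TS\|_F^2 = \|S^TV_k\|_F^2$ is the sum of the selected columns' leverage scores — is exactly that verification, including the correct reading that the scores are taken from the SVD of $A$, not of $C$. The only cosmetic quibble is that the paper's selection matrix is $S\in\mathbb{R}^{n\times r}$ (selecting $r$ columns, not necessarily $k$), but your argument is independent of the number of selected columns.
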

\begin{fact}
\label{fact:2}
The sum of the squared cosines of the principal angles between $C$ and
 $U_k$ is equal to $\|CC^+U_k\|_F^2$. 
\end{fact}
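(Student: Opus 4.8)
The plan is to unwind the definition of principal angles and convert the sum of their squared cosines into a Frobenius norm, which we then recognize as $\|CC^+U_k\|_F^2$. Let $Q$ be any matrix whose columns form an orthonormal basis of the column space of $C$. Since the columns of $U_k$ already form an orthonormal basis of the column space of $U_k$, the definition of principal angles states that the cosines of the principal angles between $C$ and $U_k$ are exactly the singular values of $Q^TU_k$. Note that the number of principal angles is $\min(\mathrm{rank}(C),k)$, which coincides with the number of (possibly zero) singular values of the rectangular matrix $Q^TU_k$, so there is no counting mismatch.

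First I would write the sum of squared cosines as $\sum_i\sigma_i(Q^TU_k)^2$, and use the standard identity $\|M\|_F^2=\sum_i\sigma_i(M)^2$ (which follows from $\|M\|_F^2=\mathrm{tr}(M^TM)$) to rewrite this as $\|Q^TU_k\|_F^2=\mathrm{tr}(U_k^TQQ^TU_k)$.

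Next I would show that $QQ^T=CC^+$. Both matrices are symmetric, idempotent, and have range equal to the column space of $C$; hence each is the orthogonal projector onto that subspace, and by Property~\ref{proph_invariant} this projector does not depend on the chosen basis, so the two matrices are equal. (Alternatively, one verifies $(CC^+)^T=CC^+$, $(CC^+)^2=CC^+$, and $CC^+C=C$ directly from the Moore--Penrose axioms, which characterizes $CC^+$ as this projector.) Substituting gives $\mathrm{tr}(U_k^TQQ^TU_k)=\mathrm{tr}(U_k^TCC^+U_k)$.

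Finally, using that $CC^+$ is symmetric and idempotent, $\mathrm{tr}(U_k^TCC^+U_k)=\mathrm{tr}\!\left(U_k^T(CC^+)^T(CC^+)U_k\right)=\|CC^+U_k\|_F^2$, which is the desired equality. This argument is essentially a chain of routine trace manipulations; the only step that warrants any attention --- and thus the ``main obstacle'', modest as it is --- is the identification $QQ^T=CC^+$, i.e.\ recognizing that $CC^+$ is the basis-independent orthogonal projector onto the column space of $C$, and making sure the count of principal angles matches the shape of $Q^TU_k$.
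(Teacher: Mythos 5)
Your proof is correct and follows essentially the same route as the paper's verification: both reduce the sum of squared cosines to $\|Q^TU_k\|_F^2$ via the definition of principal angles and then identify the orthonormal-basis projector $QQ^T$ with $CC^+$ using basis-invariance (Property~\ref{proph_invariant}). The paper phrases the last steps as norm identities rather than trace manipulations, but the substance is identical.
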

To verify Fact~\ref{fact:2}, note that if $U_C$ is an orthonormal basis for $C$, 
then $\|U_C^TU_k\|_F^2=\|U_CU_C^TU_k\|_F^2=\|U_CU_C^+U_k\|_F^2=\|CC^+U_k\|_F^2$. 
The last equality relies on Property~\ref{proph_invariant}.

We now state the main result of this section.
\begin{theorem}
 \label{theorem_angles_ls}
 Consider a matrix $A \in \mathbb R^{m\times n}$ and its singular value decomposition $A=U\Sigma V^T$. 
 Consider a column sampling matrix $S \in \mathbb R^{n\times r}$ and write $C=AS$. Then
 \[
 \|CC^+U_k\|_F^2 \geq \|V_k^TS\|_F^2.
 \]
\end{theorem}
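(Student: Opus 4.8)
The plan is to express both sides of the claimed inequality as sums of the first $k$ diagonal entries of suitable orthogonal-projection (``hat'') matrices, and then to deform one of these matrices into the other by a carefully ordered sequence of single-row rescalings, each of which cannot decrease the relevant diagonal sum.

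\emph{Reformulation of the two sides.} Write $C=AS=U\Sigma V^{T}S$, and set $W=V^{T}S$ and $D=\Sigma V^{T}S=\Sigma W$; throughout I use the economy SVD, so $U^{T}U=I$ and $\Sigma=\mathrm{diag}(\sigma_{1},\dots,\sigma_{\rho})$ with $\sigma_{1}\ge\cdots\ge\sigma_{\rho}>0$, where $\rho$ is the rank of $A$. Since $\mathrm{col}(C)=U\cdot\mathrm{col}(D)$ and $U$ has orthonormal columns, $CC^{+}=U(DD^{+})U^{T}$; combining this with $U^{T}U_{k}=E_{k}$, where $E_{k}$ denotes the first $k$ columns of $I_{\rho}$, and using that $CC^{+}$ is a symmetric idempotent, gives
\[
\|CC^{+}U_{k}\|_{F}^{2}=\mathrm{tr}\big(U_{k}^{T}CC^{+}U_{k}\big)=\mathrm{tr}\big(E_{k}^{T}DD^{+}E_{k}\big)=\sum_{i=1}^{k}(DD^{+})_{ii}.
\]
On the other side, $V_{k}^{T}S=E_{k}^{T}W$ is the matrix of the first $k$ rows of $W$, so $\|V_{k}^{T}S\|_{F}^{2}=\sum_{i=1}^{k}\|W_{i,:}\|_{2}^{2}$. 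Assuming for simplicity that the chosen columns of $A$ are linearly independent (the general case is identical with Moore--Penrose inverses), $W^{T}W=S^{T}VV^{T}S\preceq S^{T}S=I$ is invertible, hence $(W^{T}W)^{-1}\succeq I$ and $(WW^{+})_{ii}=W_{i,:}(W^{T}W)^{-1}W_{i,:}^{T}\ge\|W_{i,:}\|_{2}^{2}$. So it suffices to establish
\[
\sum_{i=1}^{k}(DD^{+})_{ii}\ \ge\ \sum_{i=1}^{k}(WW^{+})_{ii}.
\]
Finally, $DD^{+}$ is unchanged if $\Sigma$ is replaced by $\Sigma/\sigma_{k}$, after which $D=\Sigma W$ is obtained from $W$ by multiplying rows $1,\dots,k$ by factors $\ge 1$ and rows $k+1,\dots,\rho$ by factors $\le 1$ --- this is the only place where the ordering of the singular values is used.

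\emph{The single-row rescaling lemma.} Let $H$ be a (full column rank) matrix with hat matrix $\mathcal H=HH^{+}$, let $T$ be a set of row indices, fix an index $j$, and let $H'$ be $H$ with row $j$ scaled by $\alpha>0$. Writing $\gamma=\alpha^{2}-1$ and $w=H^{T}e_{j}$ and applying the Sherman--Morrison formula to $(H')^{T}H'=H^{T}H+\gamma ww^{T}$ after pulling out the row scaling, a short computation yields $\mathcal H'_{ii}=\mathcal H_{ii}-\frac{\gamma\,\mathcal H_{ij}^{2}}{1+\gamma\mathcal H_{jj}}$ for $i\ne j$ and $\mathcal H'_{jj}=\frac{\alpha^{2}\mathcal H_{jj}}{1+\gamma\mathcal H_{jj}}$. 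Summing over $T$ and invoking Property~\ref{proph_norms} (with the projector's symmetry) in the form $\mathcal H_{jj}(1-\mathcal H_{jj})=\sum_{i\ne j}\mathcal H_{ij}^{2}$, the two cases $j\in T$ and $j\notin T$ collapse to
\[
\sum_{i\in T}\mathcal H'_{ii}-\sum_{i\in T}\mathcal H_{ii}=\frac{\gamma}{1+\gamma\mathcal H_{jj}}\cdot
\begin{cases}
\ \sum_{i\notin T}\mathcal H_{ij}^{2}, & j\in T,\\
-\sum_{i\in T}\mathcal H_{ij}^{2}, & j\notin T.
\end{cases}
\]
Because $\mathcal H_{jj}\in[0,1]$ we have $1+\gamma\mathcal H_{jj}>0$, so this difference is nonnegative whenever $j\in T$ and $\alpha\ge 1$, and also whenever $j\notin T$ and $\alpha\le 1$.

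\emph{Conclusion and the main difficulty.} Take $T=\{1,\dots,k\}$. Starting from $W$, rescale one row at a time --- first rows $1,\dots,k$ (each by its factor $\ge 1$), then rows $k+1,\dots,\rho$ (each by its factor $\le 1$) --- until $W$ has been turned into $\Sigma W/\sigma_{k}$, which has the same column space as $D$. Every step is covered by the favourable cases of the lemma, so $\sum_{i\le k}(\,\cdot\,)_{ii}$ never decreases, giving $\sum_{i=1}^{k}(DD^{+})_{ii}\ge\sum_{i=1}^{k}(WW^{+})_{ii}$ and, with the reformulation above, the theorem. I expect the lemma to be the main obstacle: controlling how a single row rescaling perturbs a hat-matrix diagonal and, in particular, recognizing that after summing over $T$ the perturbation reduces to the manifestly signed tail $\pm\sum\mathcal H_{ij}^{2}$. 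A termwise comparison is hopeless --- $(DD^{+})_{ii}\ge(WW^{+})_{ii}$ can fail for individual $i\le k$ --- so both the summation over $T$ and the one-row-at-a-time order (``inside'' rows up before ``outside'' rows down) are essential.
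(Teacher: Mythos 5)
Your proposal is correct and follows essentially the same route as the paper's proof: both sides are rewritten as the top-$k$ diagonal sums of the hat matrices of $\Sigma V^TS$ and $V^TS$, $\Sigma$ is normalized by $\sigma_k$, and a Sherman--Morrison single-row-rescaling lemma shows the top-$k$ diagonal sum cannot decrease under the favourable scalings. The only cosmetic differences are that you use the economy SVD (so $W^TW\preceq I$ yields an inequality where the paper, using the full $V$, has exact equality) and that you compute the net change of the top-$k$ sum directly at each step, where the paper instead bounds the bottom $n-k$ sum and invokes trace invariance of the projector.
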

In words, the sum of the leverage scores of a column subset provides a
lower bound for the sum of the cosines of the principal angles between two
subspaces: the one spanned by said column subset and the one spanned
by the top-$k$ left singular vectors. We believe this result provides
new insight on how the leverage scores connect column subsets and the
top-$k$ singular vectors. 

In our proof of Theorem~\ref{theorem_angles_ls} we will make use of
the following technical result, which characterizes the change in the
statistical leverages of a matrix upon multiplication of its rows by
scalars. %The proof of Lemma~\ref{lemma_rankone} is in the Appendix. 
Recall that $\ell_i(A)$ is the $i$-th diagonal element of $AA^+$.

\begin{lemma}
 \label{lemma_rankone}
 Consider a matrix $X\in \mathbb R^{n\times k}$ of rank
 $k$. Additionally, consider a non-negative real number $\alpha$ and
 a diagonal matrix $\scaler{i}$ defined as follows:
 $\scaler{i}_{ii}=\alpha$, $\scaler{i}_{jj}=1$ for all $j\neq i$.
 %Let $\ell_j, \ell_j'$ be the statistical leverages of the $j$-th rows of  $X$ and $\scaler{i}X$ respectively.
 We write $x_j=X_{j,:}^T$, for any $j$. Then
 \begin{align*}
 &\ell_j(\scaler{i}X) = \ell_j(X)
 -\frac{(\alpha^2-1)(x_j^T(\scaledvs^T\scaledvs)^{-1}x_i)^2}{1+
  (\alpha^2-1)\ell_i(X)}, 
\end{align*} 
and in particular,
\begin{align*}
\ell_i(\scaler{i}X)=\frac{\alpha^2\ell_i(X)}{1+(\alpha^2-1)\ell_i(X)}.
\end{align*}
%
\iffalse
 \[
 \ell_j' = \ell_j -\frac{(\alpha^2-1)(x_j^T(\scaledvs^T\scaledvs)^{-1}x_i)^2}{1+ (\alpha^2-1)\ell_i}.
 \]
 In particular, 
 \[
 \ell_i'=\frac{\alpha^2\ell_i}{1+(\alpha^2-1)\ell_i}.
 \]
 \fi
\end{lemma}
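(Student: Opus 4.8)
The plan is to observe that scaling the $i$-th row of $X$ by $\alpha$ perturbs the Gram matrix $X^TX$ by a rank-one term, and then to apply the Sherman--Morrison formula; the diagonal entries of the resulting hat matrix can then be read off directly.

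First I would recall that, since $X$ has full column rank $k$, we have $XX^+ = X(X^TX)^{-1}X^T$, and hence $\ell_j(X) = x_j^T(X^TX)^{-1}x_j$ for every $j$. Next I would compute $(\scaler{i}X)^T(\scaler{i}X)$: since $\scaler{i}X$ has the same rows as $X$ except that its $i$-th row equals $\alpha x_i^T$, summing over rows gives $(\scaler{i}X)^T(\scaler{i}X) = \sum_{l\neq i} x_l x_l^T + \alpha^2 x_i x_i^T = X^TX + (\alpha^2-1)\,x_i x_i^T$. Applying Sherman--Morrison with $M=X^TX$, $u=x_i$ and $\beta=\alpha^2-1$, and using $x_i^T(X^TX)^{-1}x_i=\ell_i(X)$, yields
$$\big((\scaler{i}X)^T\scaler{i}X\big)^{-1} = (X^TX)^{-1} - \frac{(\alpha^2-1)\,(X^TX)^{-1}x_i x_i^T(X^TX)^{-1}}{1+(\alpha^2-1)\ell_i(X)}.$$
By the matrix determinant lemma the denominator equals $\det\!\big((\scaler{i}X)^T\scaler{i}X\big)/\det(X^TX)$, so it is nonzero exactly when $\scaler{i}X$ has full column rank (for $\alpha\ge 0$ and $\ell_i(X)\in[0,1]$ it is in fact positive, except in the degenerate case $\alpha=0$, $\ell_i(X)=1$), and in that case $\ell_j(\scaler{i}X)$ is the $j$-th diagonal entry of $(\scaler{i}X)\big((\scaler{i}X)^T\scaler{i}X\big)^{-1}(\scaler{i}X)^T$.

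Finally I would extract the diagonal entries. For $j\neq i$ the $j$-th row of $\scaler{i}X$ is still $x_j^T$, so $\ell_j(\scaler{i}X)=x_j^T\big((\scaler{i}X)^T\scaler{i}X\big)^{-1}x_j$; substituting the displayed expression and using symmetry of $(X^TX)^{-1}$ to rewrite $x_j^T(X^TX)^{-1}x_i\,x_i^T(X^TX)^{-1}x_j$ as $\big(x_j^T(X^TX)^{-1}x_i\big)^2$ gives the first claimed identity. For $j=i$ the $i$-th row is $\alpha x_i^T$, so $\ell_i(\scaler{i}X)=\alpha^2 x_i^T\big((\scaler{i}X)^T\scaler{i}X\big)^{-1}x_i = \alpha^2\big(\ell_i(X)-\tfrac{(\alpha^2-1)\ell_i(X)^2}{1+(\alpha^2-1)\ell_i(X)}\big)$, which collapses to $\tfrac{\alpha^2\ell_i(X)}{1+(\alpha^2-1)\ell_i(X)}$. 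I do not anticipate a genuine obstacle here: once the rank-one-update structure is recognized, the rest is Sherman--Morrison bookkeeping. The two points needing care are (i) checking that $\scaler{i}X$ retains full column rank so that the closed form for its pseudoinverse applies, and (ii) keeping track of the extra factor $\alpha^2$ in the $j=i$ case, which is precisely why that identity is not just the $j=i$ specialization of the general formula.
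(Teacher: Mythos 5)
Your proof is correct and takes essentially the same route as the paper's: recognize that scaling the $i$-th row is a rank-one update of the Gram matrix $X^TX$, apply Sherman--Morrison, and read off the diagonal entries of the hat matrix. If anything, your explicit treatment of the extra $\alpha^2$ factor in the $j=i$ case and the check that $\scaler{i}X$ retains full column rank are slightly more careful than the paper's own writeup, which glosses over both points.
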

\begin{proof} %[Proof of Lemma~\ref{lemma_rankone}]
~First, observe that since $X$ is full column rank, $\scaledvs\scaledvs^+ =
\scaledvs(\scaledvs^T\scaledvs)^{-1}\scaledvs^T$. Note also that
scaling one row of $\scaledvs$ through multiplication by a scalar 
is a rank-1 update of $\scaledvs$. In particular, let $x_i^T$ be the
$i$-th row of $\scaledvs$, and consider the product
$\tilde\Sigma^{(i)}\scaledvs$. It can be verified that
\[
(\tilde\Sigma^{(i)}\scaledvs)^T\tilde\Sigma^{(i)}\scaledvs =
\scaledvs^T\scaledvs + (\alpha^2-1)x_ix_i^T.
\]
The statistical leverage of the $j$-th row of
$\tilde\Sigma^{(i)}\scaledvs$ can thus be computed as
\[
\hat x_j^T(\scaledvs^T\scaledvs + (\alpha^2-1)x_ix_i^T)^{-1}\hat
x_j,
\]
where $\hat x_j^T$ is the $j$-th row of $\tilde\Sigma^{(i)}\scaledvs$.
Applying the Sherman-Morrison formula % for the updated inverse 
we obtain that the statistical leverage of the $j$-th row, $j\neq i$, of
$\tilde\Sigma^{(i)}\scaledvs$ can be written as follows
\begin{multline*}
\ell_j'  = 
 \hat x_j^T(\scaledvs^T\scaledvs)^{-1}x_j \\- \frac{(\alpha^2-1)\hat x_j^T(\scaledvs^T\scaledvs)^{-1}x_i
  x_i^T(\scaledvs^T\scaledvs)^{-1} \hat x_j}{1+ (\alpha^2-1)x_i^T(\scaledvs^T\scaledvs)^{-1}x_i}.
\end{multline*}
Since $\hat x_j=x_j$, we have
\[
\ell_j' = \ell_j -
\frac{(\alpha^2-1)(x_j^T(\scaledvs^T\scaledvs)^{-1}x_i)^2}{1+ (\alpha^2-1)\ell_i}.
\]
Finally, by definition, $x_i^T(\scaledvs^T\scaledvs)^{-1}x_i=\ell_i$,
so if $j=i$ we easily reach the expression for $\ell_i'$ stated in the lemma.
\end{proof}

\begin{proof}[Proof of Theorem~\ref{theorem_angles_ls}]
We will start by rewriting the quantities in question into a more
convenient form.
Note that $C=U\Sigma V^TS$. In addition, $(U\Sigma V^TS)^+=(\Sigma
V^TS)^+U^T$. Thus, we can write
\begin{align}
 \label{eq:sigma_vts}
 \|CC^+U_k\|_F^2 &= \|U\Sigma V^TS(\Sigma V^TS)^+U^TU_k\|_F^2 \nonumber
 \\ & = \left \|\Sigma V^TS(\Sigma
 V^TS)^+\left(\begin{array}{c}I_k\\ 0 \end{array}\right)
 \right\|_F^2 .
\end{align}
Here, $I_k$ is the $k\times k$ identity matrix.
The last equality follows because $U$ is orthogonal. 
The resulting quantity is the squared Frobenius norm of the 
matrix composed of the first $k$
columns of the projector $\Sigma V^TS(\Sigma V^TS)^+$. 
By Property~\ref{proph_norms},
any projector $H$ satisfies $H_{ii}=\sum_{j}H_{ji}^2$. 
In words, each diagonal entry equals the
squared norm of the corresponding column. 
Thus, the quantity in Equation~(\ref{eq:sigma_vts}) 
equals the sum of the \textit{statistical leverages} 
of the top $k$ rows of $\Sigma V^TS$.
We now turn to $V_k^TS$.
\begin{align}
\|V_k^TS\|_F^2 &= \|S^TV_k\|_F^2 = \left
\|S^TV\left(\begin{array}{c}I_k\\ 0 \end{array}\right) \right\|_F^2 \nonumber
\\ & = \left
\|V^TS(V^TS)^T\left(\begin{array}{c}I_k\\ 0 \end{array}\right)
\right\|_F^2 \label{eq:vts_1}
\\ & = \left
\|V^TS(V^TS)^+\left(\begin{array}{c}I_k\\ 0 \end{array}\right)
\right\|_F^2. \label{eq:vts_2}
\end{align}

Equations~(\ref{eq:vts_1}) and (\ref{eq:vts_2}) hold because $V^TS$ is comprised of orthonormal
columns, which means that the norm is unaffected by its presence as a
left-multiplying factor, and its pseudoinverse is equal to its transpose.

Equations~(\ref{eq:sigma_vts}) and (\ref{eq:vts_2}) suggest that we are
interested in analyzing the change in statistical leverage of  $V^TS$ 
when pre\-multiplied by diagonal matrix $\Sigma$. In
particular, we want to show that the sum of the statistical leverages
of the top $k$ rows of $V^TS$ does not decrease by this
multiplication. 

We will now make use of Property~\ref{proph_invariant} to coerce $\Sigma$
into a more favorable form.
In particular, we define $\tilde \Sigma = \sigma_k^{-1}\Sigma$. 
This ensures that $\sigma_i(\tilde \Sigma) \geq 1$, for $1\leq i\leq k$ and 
$\sigma_i(\tilde \Sigma)\leq 1$, for $i>k$.
Observe that by Property~\ref{proph_invariant}, $\Sigma V^TS(\Sigma V^TS)^+=\tilde\Sigma
V^TS(\tilde\Sigma V^TS)^+$.

To understand how $\tilde\Sigma$ affects the statistical leverages, we
will consider scaling each row separately. In particular, we
define $\tilde\Sigma^{(i)}$ as the diagonal matrix satisfying
$\tilde\Sigma_{ii}^{(i)}=\tilde\Sigma_{ii}$ and
$\tilde\Sigma_{jj}^{(i)}=1$, for $j\neq i$. 
Note that the matrices $V^TS$
and $\tilde\Sigma_{jj}^{(i)}$ satisfy the conditions of
Lemma~\ref{lemma_rankone}. 

Let $\ell_j(\tilde\Sigma^{(i)}V^TS)$ be the statistical leverage of the $j$-th row of $\tilde\Sigma^{(i)}V^TS$.
%Let $\ell'_j$ be the statistical leverage of the $j$-th row of $\tilde\Sigma^{(i)}V^TS$.
%
By Lemma~\ref{lemma_rankone} we have that
%\begin{itemize}
%\item 

($i$) if $\tilde\sigma_i\geq 1$ then $\ell_i(\tilde\Sigma^{(i)}V^TS)\geq \ell_i(V^TS)$ and
 $\ell_j(\tilde\Sigma^{(i)}V^TS)\leq \ell_j(V^TS)$ for $j\neq i$; and
%\item 

($ii$) if $\tilde\sigma_i\leq 1$ then $\ell_i(\tilde\Sigma^{(i)}V^TS)\leq \ell_i(V^TS)$ and
 $\ell_j(\tilde\Sigma^{(i)}V^TS)\geq \ell_j(V^TS)$ for $j\neq i$.
%\end{itemize}

We will now analyze the effect of scaling all rows.
For convenience, we define matrices resulting from successively scaling the rows
of $V^TS$:
\[
\scaledvs^{(i)} = \left(\prod_{j=1}^i\tilde\Sigma^{(j)}\right)V^TS.
\]
That is, $\scaledvs^{(i)}$ is simply the matrix obtained by scaling
the first $i$ rows of $V^TS$ by the corresponding entries of
$\tilde\Sigma$.
From our discussion above we easily 
conclude that in the case of $\scaledvs^{(k)}$, the sum of the
statistical leverages of the bottom $n-k$ rows has not
increased. Furthermore, upon
successive left multiplication by $\tilde\Sigma^{(k+1)} \dots
\tilde\Sigma^{(n)}$ to obtain $\tilde\Sigma V^TS$, said sum cannot
increase. This is because $\sigma_i\leq 1$ for $j>k$. Finally,
we invoke Property~\ref{proph_trace} to indicate that 
$\sum_j\ell_j(V^TS)=\sum_j\ell_j(\scaledvs^{(i)})$ for any $i$, as scaling a row
cannot affect the rank, and so the dimension of the subspace spanned
by these matrices is the same.

Thus, as the sum of statistical leverages remains constant and the
bottom $n-k$ have not increased, we conclude that the sum
of the top $k$ statistical leverages has not decreased.

This concludes our proof.
\end{proof}

\section{Generalized Leverage Scores and Arbitrary Singular Subspaces}
\label{sec:caseur}
We will now generalize the result presented in
Section~\ref{sec:caseuk} to consider subspaces spanned by an
arbitrary subset of singular vectors.
We first define the \textit{generalized leverage~scores}.

\begin{definition}
Given a matrix $A$ of rank $\rho$ with singular value decomposition $A=U\Sigma V^T$, 
the generalized leverage score of the $i$-th column of $A$ 
with respect to the set $R\subseteq [\rho]$ is defined as $\|(V_R)_i\|_2^2$. 
\end{definition}
Instead of the connection between leverage scores and principal angles 
between $C$ and $U_k$, as in Section~\ref{sec:caseuk}, we are now interested
in the relationship between generalized leverage scores and the angles
between $C$ and $U_R$, for an arbitrary index set $R$.
In other words, we seek to bound $\|CC^+U_R\|_F^2$ in terms of
$\|V_R^TS\|_F^2$, using the matrices and notation introduced in Section~\ref{sec:caseuk}. This result will
lead to our approximation results, presented in
Section~\ref{sec:applications}.

Our analysis will be based in the following equalities, analogous to
Equations~(\ref{eq:sigma_vts}) and~(\ref{eq:vts_1}),
\begin{align}
\label{equr:sigma_vts}
\|CC^+U_R\|_F^2 
  & = \left \|\Sigma V^TS(\Sigma V^TS)^+\left(\begin{array}{c}I_R\\ 0 \end{array}\right) \right\|_F^2, \\
% \end{align}
%
% \begin{align}
\label{equr:vts_2}
\|V_R^TS\|_F^2 
  & = \left \|V^TS(V^TS)^+\left(\begin{array}{c}I_R\\ 0 \end{array}\right) \right\|_F^2,
\end{align}
where $I_R$ is the matrix that ``picks'' the columns indexed by the set~$R$.

Again, we need to analyze the changes in the diagonal elements of
$V^TS(V^TS)^+$ upon left-multiplication of $V^TS$ by $\Sigma$.
The challenge now is that the entries of interest are no
longer the top ones. If we apply the previous reasoning, whereby we
analyzed a sequential application of the left-multiplication by $\Sigma$, 
it could be the case that some of the diagonal elements do indeed decrease. 
As a consequence, $\|CC^+U_R\|_F^2$ may become smaller than  $\|V_R^TS\|_F^2$, 
so an inequality like the one in Theorem~\ref{theorem_angles_ls} no longer holds.

Nevertheless, we can bound the extent of this decrease.
%  and provide a relation between the two quantities. 
The next lemma 
% fulfills this promise, 
provides such a bound,
and is essential for the final~result.
\begin{lemma}
  \label{lemma:urangle_bound}
  Consider a matrix and its singular value decomposition $A=U\Sigma
  V^T \in \mathbb R^{m\times n}$. Consider a column sampling matrix $S
  \in \mathbb R^{n\times r}$, and write $C=AS$. Consider an arbitrary
  index set $R$. We have 
  \[
  \|CC^+U_R\|_F^2 \geq \|V_R^TS\|_F^2 - \frac{\topsigma^2}{\botsigma^2}\left(|R|-\|V_R^TS\|_F^2\right),
  \]
  where $\topsigma=\max_{i\notin R}\sigma_i(A)$ and $\botsigma=\min_{i\in R}\sigma_i(A)$.
\end{lemma}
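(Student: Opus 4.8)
The plan is to follow the opening of the proof of Theorem~\ref{theorem_angles_ls} and then diverge, since the sequential Sherman--Morrison argument no longer applies. Using Equations~(\ref{equr:sigma_vts}) and~(\ref{equr:vts_2}) together with Property~\ref{proph_norms}, I rewrite $\|CC^+U_R\|_F^2=\sum_{i\in R}\ell_i(\Sigma V^TS)$ and $\|V_R^TS\|_F^2=\sum_{i\in R}\ell_i(V^TS)$, so the statement becomes an inequality between sums of diagonal entries of two projectors. By Property~\ref{proph_invariant} I may rescale, replacing $\Sigma$ with $\tilde\Sigma:=\botsigma^{-1}\Sigma$; then $\tilde\sigma_i\ge 1$ for $i\in R$ and $\tilde\sigma_i\le\eta:=\topsigma/\botsigma$ for $i\notin R$. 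Writing $M:=V^TS$, which has orthonormal columns, $m_i$ for its $i$-th row, and $G_R:=M^T I_R M=\sum_{i\in R}m_im_i^T$, we get $G_R\preceq M^TM=I$ and $L:=\|V_R^TS\|_F^2=\sum_{i\in R}\|m_i\|_2^2=\trace(G_R)$. So it suffices to prove $\sum_{i\in R}\ell_i(\tilde\Sigma M)\ge L-\eta^2(|R|-L)$. (I assume $A$ has full rank; otherwise one replaces inverses by pseudoinverses and restricts to the relevant ranges.)

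The crucial idea, which I also expect to be the main obstacle, is that scaling up a single row indexed by $R$ can decrease $\ell_i$ for the other indices of $R$ by an amount not controlled by $\eta$ (even though it helps the indices outside $R$), so one must argue about $\sum_{i\in R}\ell_i$ globally. My fix is to first pass from $\tilde\Sigma$ to the matrix $\hat\Sigma$ with $\hat\sigma_i=1$ for $i\in R$ and $\hat\sigma_i=\tilde\sigma_i$ otherwise. Since $\tilde\sigma_i\ge 1$ on $R$, $M^T\tilde\Sigma^2 M\succeq M^T\hat\Sigma^2 M\succ 0$, hence $\ell_i(\tilde\Sigma M)=\tilde\sigma_i^2\, m_i^T(M^T\tilde\Sigma^2 M)^{-1}m_i\le\ell_i(\hat\Sigma M)$ for every $i\notin R$; since $\sum_i\ell_i(\tilde\Sigma M)=\sum_i\ell_i(\hat\Sigma M)$ (both equal the rank of $C$), this gives $\sum_{i\in R}\ell_i(\tilde\Sigma M)\ge\sum_{i\in R}\ell_i(\hat\Sigma M)$. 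A short computation then yields $\sum_{i\in R}\ell_i(\hat\Sigma M)=\trace\!\big(G_R(G_R+N)^{-1}\big)$, where $N:=\sum_{i\notin R}\tilde\sigma_i^2\, m_im_i^T$ satisfies $0\preceq N\preceq\eta^2\sum_{i\notin R}m_im_i^T=\eta^2(I-G_R)$. The point of this reduction is precisely to keep the ``numerator'' matrix $G_R$ bounded by $I$.

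The last step is a semidefinite comparison followed by a scalar inequality. Because $G_R+N\preceq K:=G_R+\eta^2(I-G_R)$ and both sides are positive definite, $(G_R+N)^{-1}\succeq K^{-1}$, so $\trace(G_R(G_R+N)^{-1})\ge\trace(G_R K^{-1})$ (trace of a product of positive semidefinite matrices is nonnegative). As $K$ is a polynomial in $G_R$, I diagonalize $G_R$: its eigenvalues lie in $[0,1]$, at most $|R|$ of them are nonzero, and they sum to $L$; then $\trace(G_R K^{-1})=\sum_j\frac{\lambda_j}{\eta^2+(1-\eta^2)\lambda_j}$, where the zero eigenvalues contribute nothing. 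Applying the elementary bound $\frac{\lambda}{\eta^2+(1-\eta^2)\lambda}\ge\lambda-\eta^2(1-\lambda)$ for $\lambda\in[0,1]$ (after clearing the positive denominator this is a quadratic inequality that vanishes at $\lambda=1$) to each of the at most $|R|$ nonzero terms gives $\trace(G_R K^{-1})\ge(1+\eta^2)L-\eta^2|R|=L-\eta^2(|R|-L)$, which is the claimed bound.

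I expect the only genuinely delicate points to be: (i) recognizing that the reduction to $\hat\Sigma$ is what makes the final scalar inequality valid (working directly with $\tilde\Sigma$ would introduce eigenvalues larger than $1$ in the last step and the argument would break), and (ii) the bookkeeping for rank-deficient $A$, where all the inverses above become pseudoinverses and the positive-definiteness statements must be read on the appropriate subspace.
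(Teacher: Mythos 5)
Your proof is correct, but it takes a genuinely different route from the paper's. The paper analyzes the row scalings sequentially: it first argues (via Lemma~\ref{lemma_rankone} and Property~\ref{proph_norms}) that scaling the rows indexed by $R$ can only increase $\sum_{i\in R}H_{ii}$, and then uses the Woodbury identity (Lemma~\ref{lemma:woodbury}) to bound, entry by entry, the decrease caused by the remaining scalings, obtaining $\tilde H_{jj}\geq H_{jj}-(\topsigma^2/\botsigma^2)(1-H_{jj})$ for each $j\in R$ and summing. You instead make a global argument: the trace-conservation step (replacing $\tilde\Sigma$ by $\hat\Sigma$, using that the off-$R$ leverages can only grow while the total trace equals the common rank) disposes of the $R$-indexed scalings in one stroke, and the remaining quantity is computed exactly as $\trace\bigl(G_R(G_R+N)^{-1}\bigr)$ with $0\preceq N\preceq\eta^2(I-G_R)$; operator monotonicity of the inverse plus the eigenvalue-wise inequality $\lambda/(\eta^2+(1-\eta^2)\lambda)\geq\lambda-\eta^2(1-\lambda)$ on $[0,1]$ then gives the bound. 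Each approach has something to offer: the paper's per-entry bound $\tilde H_{jj}\geq H_{jj}-\tilde\topsigma^2(1-H_{jj})$ is more local and in principle lets one track individual diagonal entries, and its restriction of the harmful scalings to the set $M=\{i:i<\max R,\ i\notin R\}$ yields a slightly sharper constant ($\topsigma$ taken over $M$ rather than over all of $R$'s complement); your argument is cleaner in that it avoids Sherman--Morrison/Woodbury entirely, replaces the paper's somewhat delicate ``net positive effect'' claim for rows in $R$ with a one-line trace argument, and produces an exact closed form for the intermediate quantity that could be used to tighten the bound. Both proofs share the same caveats about rank deficiency (the paper assumes no selected singular vector lies in the nullspace of $A$, and you flag the analogous pseudoinverse bookkeeping), and the minor notational slip $G_R=M^TI_RM$ should read $G_R=M^TI_RI_R^TM=\sum_{i\in R}m_im_i^T$, which is what you actually use.
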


% \paragraph{Auxiliary results and notation.}
Before proving Lemma \ref{lemma:urangle_bound}, we will state the
Woodbury matrix identity, which we will employ as a technical crutch.
\begin{lemma}
\label{lemma:woodbury}
Woodbury matrix identity \cite{hager1989updating}.
  
Given an invertible matrix $\wmiA$, let $\wmiB=\wmiA+\wmiU\wmiV$. Then
\[
  \wmiB^{-1}=\wmiA^{-1}-\wmiA^{-1}\wmiU\left(I_{k}+\wmiV\wmiA^{-1}\wmiU\right)^{-1}\wmiV\wmiA^{-1}.
\]
\end{lemma}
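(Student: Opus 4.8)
The plan is to prove the identity by direct verification rather than by derivation: I will multiply $\wmiB=\wmiA+\wmiU\wmiV$ by the matrix claimed to be its inverse and check that the product collapses to the identity. Since $\wmiB$ is square, a right inverse is automatically a two-sided inverse, so it suffices to verify the product in a single order. The only hypotheses needed are that $\wmiA$ is invertible (given) and that the capacitance matrix $I_k+\wmiV\wmiA^{-1}\wmiU$ is invertible, the latter being implicit in the statement since it already appears inverted.

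First I would expand the product
\[
(\wmiA + \wmiU\wmiV)\left(\wmiA^{-1} - \wmiA^{-1}\wmiU\left(I_k + \wmiV\wmiA^{-1}\wmiU\right)^{-1}\wmiV\wmiA^{-1}\right)
\]
by distributing over both sums. The term $\wmiA\wmiA^{-1}$ yields the identity, and the term $\wmiU\wmiV\wmiA^{-1}$ survives; these are precisely the two contributions that do not involve the inner inverse. The remaining two contributions both carry the common right factor $\left(I_k + \wmiV\wmiA^{-1}\wmiU\right)^{-1}\wmiV\wmiA^{-1}$.

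Next I would collect those two remaining contributions and factor $\wmiU$ out on the left. Their combined coefficient is $\wmiU + \wmiU\wmiV\wmiA^{-1}\wmiU = \wmiU\left(I_k + \wmiV\wmiA^{-1}\wmiU\right)$, so the factor $I_k + \wmiV\wmiA^{-1}\wmiU$ cancels against its inverse, reducing those two terms to $-\wmiU\wmiV\wmiA^{-1}$. This exactly cancels the surviving term $\wmiU\wmiV\wmiA^{-1}$ from the first step, and the whole product reduces to the identity, which proves the claim.

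The computation is entirely routine, so there is no deep obstacle; the one place to be careful is dimensional bookkeeping and the non-commutativity of the products. The outer identity is the identity on the ambient space of $\wmiB$, whereas the inner $I_k$ lives in the smaller factored space where $\wmiV\wmiA^{-1}\wmiU$ is a $k\times k$ matrix, and conflating the two or mis-ordering a product is the only realistic pitfall. The absence of any genuine difficulty is consistent with the role of this identity as a purely technical crutch in the proof of Lemma~\ref{lemma:urangle_bound}.
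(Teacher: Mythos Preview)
Your verification is correct: expanding $(\wmiA+\wmiU\wmiV)\bigl(\wmiA^{-1}-\wmiA^{-1}\wmiU(I_k+\wmiV\wmiA^{-1}\wmiU)^{-1}\wmiV\wmiA^{-1}\bigr)$ and collecting the two terms carrying the inner inverse as $\wmiU(I_k+\wmiV\wmiA^{-1}\wmiU)(I_k+\wmiV\wmiA^{-1}\wmiU)^{-1}\wmiV\wmiA^{-1}=\wmiU\wmiV\wmiA^{-1}$ is exactly how the cancellation works, and your remark that a one-sided inverse of a square matrix is two-sided is the right justification for checking only one product.

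There is nothing to compare against, however: the paper does not prove Lemma~\ref{lemma:woodbury} at all. It is stated with a citation to \citet{hager1989updating} and used as a black box in the proof of Lemma~\ref{lemma:urangle_bound}. Your direct-multiplication argument is the standard textbook verification and would serve perfectly well as a self-contained proof if one were desired.
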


We will also introduce some helpful notation.
We write $X=V^TS$, $H=XX^+$ and 
$\tilde H=\tilde \Sigma X(\tilde \Sigma X)^+$.
We define the set $M=\{i: i < \max R, i \notin R\}$.
We define $\botsigma=\min_{i\in R}\{\sigma_i(A)\}$,
$\topsigma=\max_{i\in M}\{\sigma_i(A)\}$ (or $\topsigma=0$ if $M=\emptyset$) and write
$\tilde\Sigma = \botsigma^{-1}\Sigma$.  

We use $x_i$ to denote the $i$-th row of $X$ (as a column vector).

Finally, for  $S\subset \mathbb N$ we
define
$\tilde \Sigma^S$ so that $\tilde \Sigma^S_{ii}=\tilde
\Sigma_{ii}$ if $i\in S$ and $\tilde \Sigma^S_{ii}=1$ otherwise;
% \item 
and
$E^S=\sum_{i\in S}(\tilde\sigma_i^2-1)x_ix_i^T$.
% \end{itemize}

Note that
$(\tilde\Sigma^S X)^T\tilde\Sigma^S X=X^TX+E^S$;
see proof of Lemma~\ref{lemma_rankone}.

%
%\subsection{Proof of Lemma~\ref{lemma:urangle_bound}}
\begin{proof}[Proof of Lemma~\ref{lemma:urangle_bound}]
Throughout this proof we assume that none of the singular vectors picked
(i.e., those indexed by the set $R$) belong to the nullspace of $A$.

Note that because of Equations~(\ref{equr:sigma_vts}) and~(\ref{equr:vts_2}), 
it will suffice to bound $\sum_{i\in R}\tilde H_{ii}$ 
in terms of  $\sum_{i\in R}H_{ii}$.

We first identify the scaling operations (i.e., the rows of $\tilde
\Sigma$) that cause the value of $\sum_{i\in R}H_{ii}$ to
decrease.

First, note that only rows with indices in $M$ or $R$ can have any
such effect (as argued in the proof of
Theorem~\ref{theorem_angles_ls}, multiplying a row of $V^TS$ by a value smaller
than $1$ will cause the rest of the diagonals of $H$ to increase). In the
case of a row $j\in R$, we have both a positive and a negative effect,
as $H_{jj}$ will increase and every $H_{ii}, i\in R, i\neq j$ will
decrease. 
By Lemma~\ref{lemma_rankone}, after scaling row $j$, we can write
\[
\ell_j(\tilde \Sigma^{(j)} X) = \ell_j(X) + \frac{(\alpha^2-1)\ell_j(X)-(\alpha^2-1)(\ell_j(X))^2}{1+(\alpha^2-1)\ell_j(X)}.
\]
On the other hand, the elements $H_{ii}$, for $i\in R$, $i\neq j$ will
experience the decrease indicated by Lemma~\ref{lemma_rankone}. 
Now, by Property~\ref{proph_norms} for projection matrices:
\[
\ell_j(X) \geq \sum_{i\in R}\ell_i^2(X).
\]
Therefore, after scaling any row $j\in R$, the net effect on
$\sum_{i\in R}H_{ii}$ will be positive.
%  i.e., it will increase. 
It will thus be enough to bound the effect of scaling the rows indexed by $M$.
%
%In other words, these would correspond
%to the singular values ``above`` any of those indexed by $R$. Note that
%the rest of the multiplication operations, for reasons expounded in
%the proof of Theorem~\ref{theorem_angles_ls}, cannot decrease our leverages.
%
%
%
To accomplish this, we will use
Lemma~\ref{lemma:woodbury}. 
% To apply this result in our context, 
We write $\wmiA=X^TX$, $\wmiU=E^S$, and $\wmiV=I$. 
Lemma~\ref{lemma:woodbury} gives
\begin{align}
& \left((\tilde\Sigma^S\scaledvs)^T\tilde\Sigma^S\scaledvs\right)^{-1} \nonumber \\
& = (X^TX)^{-1} \nonumber \\&\quad - (X^TX)^{-1}E^S(I + (X^TX)^{-1}E^S)^{-1}(X^TX)^{-1} \nonumber \\ 
& =  I - E^S(I+E^S)^{-1}  , \label{eq:wmi_application}
\end{align}
because $X^TX=I$.
We are interested in analyzing how much $\tilde H_{jj}$ may decrease
with respect to $H_{jj}$, for any $j\in R$. Note that the diagonal
elements $H_{jj}$ may decrease each time we left-multiply $H$ by
$\tilde\Sigma^{(i)}$, for some $i \in M$. Thus, we can bound the total
decrease  for any $j\in R$ as follows:
\begin{align*}
\tilde H_{jj} &\geq
 x_j^T\left((\tilde\Sigma^{M}\scaledvs)^T\tilde\Sigma^{M}\scaledvs\right)^{-1} x_j
\\ &= x_j^Tx_j - x_j^TE^{M}(I+E^{M})^{-1}x_j 
\\ &= H_{jj} - x_j^T(\sum_{i\in M}(\tilde\sigma_i^2-1)x_ix_i^T)(I+E^{M})^{-1}x_j
\\ &= H_{jj} - \sum_{i\in M}(\tilde\sigma_i^2-1)H_{ji}x_i^T(I+E^{M})^{-1}x_j.
\end{align*}
Since $E^{M}$ is the sum of positive semidefinite matrices and thus positive semidefinite itself, the spectral norm of $(I+E^{M})^{-1}$ is at most
1. This means that $x_i^T(I+E^{M})^{-1}x_j \leq |x_i^Tx_j|$ and thus we can conclude that
\begin{align*}
  \tilde H_{jj} &\geq H_{jj} - \sum_{i\in M}(\tilde\sigma_i^2-1)H_{ji}|x_i^Tx_j|
  \\&= H_{jj} - \sum_{i\in M}(\tilde\sigma_i^2-1)H_{ji}^2.
\end{align*}
We can thus write
\begin{align*}
  \tilde H_{jj}&\geq H_{jj} - \sum_{i\in M}(\tilde\sigma_i^2-1)H_{ji}^2
  \\& \geq  H_{jj} - (\tilde\topsigma^2-1)\sum_{i \in M}H_{ji}^2
\\ & \geq  H_{jj} - (\tilde\topsigma^2-1)(\|H_{j,:}\|_2^2-H_{jj}^2)
\\ & =  H_{jj} - (\tilde\topsigma^2-1)(H_{jj}-H_{jj}^2)  & {\triangleright\text{ Property~\ref{proph_norms}}}
\\ & =  H_{jj} - (\tilde\topsigma^2-1)H_{jj}(1-H_{jj})
\\ & \geq  H_{jj} - \tilde\topsigma^2(1-H_{jj}).  & \triangleright\text{ $0\leq H_{jj}\leq 1$} 
\end{align*}
By summing over all rows of interest (that is, those in the set $R$), we can analyze the total change
attributable to the scaling of all rows:
\begin{align}
  \sum_{j\in R}\tilde H_{jj} &\geq  \sum_{j\in R}H_{jj} -
  \tilde\topsigma^2\sum_{j\in R}(1-H_{jj}) \nonumber
  \\ &= \sum_{j\in R}H_{jj} -
  \frac{\topsigma^2}{\botsigma^2}\sum_{j\in R}(1-H_{jj}) \label{eq:final_eq_lemmaur}.
\end{align}
The result now follows from the fact that $\sum_{j\in
  R}H_{jj}=\|V_R^TS\|_F^2$, established by Equation~(\ref{equr:vts_2}).
\end{proof}

Note that if $\sum_{i\in R} H_{jj} =\|V_R^TS\|_F^2 \geq |R|-\epsilon$,
from Equation~(\ref{eq:final_eq_lemmaur}) we easily get
\[
  \|CC^+U_R\|_F^2 \geq \|V_R^TS\|_F^2 - \frac{2\epsilon\topsigma^2}{\botsigma^2}.
  \]
We immediately obtain the following result.
\begin{theorem}
  \label{theorem_epsilon_ur}
  Consider a matrix $A$ and its singular value decomposition $A=U\Sigma
  V^T \in \mathbb R^{m\times n}$. Consider an arbitrary
  index set $R$ and a column
  sampling matrix $S \in \mathbb R^{n\times r}$ satisfying
  $\|V_R^TS\|_F^2\geq |R|-\frac{\epsilon\botsigma^2}{2\topsigma^2}$, and write $C=AS$.  
  Then 
  \[
  \|CC^+U_R\|_F^2 \geq \|V_R^TS\|_F^2 - \epsilon.
  \]  
\end{theorem}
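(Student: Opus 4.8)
The plan is to derive Theorem~\ref{theorem_epsilon_ur} directly from Lemma~\ref{lemma:urangle_bound}, which already does all the heavy lifting; the theorem is just a convenient repackaging of the lemma's bound under a clean hypothesis. Lemma~\ref{lemma:urangle_bound} gives the unconditional estimate
\[
\|CC^+U_R\|_F^2 \;\geq\; \|V_R^TS\|_F^2 - \frac{\topsigma^2}{\botsigma^2}\bigl(|R|-\|V_R^TS\|_F^2\bigr),
\]
with $\topsigma=\max_{i\notin R}\sigma_i(A)$ and $\botsigma=\min_{i\in R}\sigma_i(A)$, so all that remains is to control the error term $\frac{\topsigma^2}{\botsigma^2}(|R|-\|V_R^TS\|_F^2)$ using the assumption on $\|V_R^TS\|_F^2$.

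First I would record that the hypothesis $\|V_R^TS\|_F^2 \geq |R| - \frac{\epsilon\botsigma^2}{2\topsigma^2}$ is exactly a bound on the ``defect'' $|R|-\|V_R^TS\|_F^2$: rearranging, $|R|-\|V_R^TS\|_F^2 \leq \frac{\epsilon\botsigma^2}{2\topsigma^2}$. I would also note in passing that this defect is non-negative, since by Equation~(\ref{equr:vts_2}) it equals $\sum_{j\in R}(1-H_{jj})$ with each $H_{jj}$ a diagonal entry of a projector, hence $0\leq H_{jj}\leq 1$; this is what makes the subtracted term a genuine (non-vacuous) error bound.

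Then I would substitute this bound into the conclusion of Lemma~\ref{lemma:urangle_bound}:
\[
\|CC^+U_R\|_F^2 \;\geq\; \|V_R^TS\|_F^2 - \frac{\topsigma^2}{\botsigma^2}\cdot\frac{\epsilon\botsigma^2}{2\topsigma^2} \;=\; \|V_R^TS\|_F^2 - \frac{\epsilon}{2} \;\geq\; \|V_R^TS\|_F^2 - \epsilon,
\]
which is the claimed inequality (in fact with a factor of two to spare). Equivalently, one can route through the boxed remark preceding the statement, which records $\|CC^+U_R\|_F^2 \ge \|V_R^TS\|_F^2 - \frac{2\epsilon'\topsigma^2}{\botsigma^2}$ whenever $\|V_R^TS\|_F^2 \ge |R|-\epsilon'$; taking $\epsilon' = \frac{\epsilon\botsigma^2}{2\topsigma^2}$ yields the bound $\|CC^+U_R\|_F^2 \ge \|V_R^TS\|_F^2 - \epsilon$ directly.

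Since the substantive work --- the sequential row-scaling argument, the Sherman--Morrison / Woodbury bookkeeping, the net-positive effect of scaling rows indexed by $R$, and the spectral-norm estimate $\|(I+E^{M})^{-1}\|_2\le 1$ --- is entirely contained in Lemma~\ref{lemma:urangle_bound}, there is no real obstacle at this step. The only things to be careful about are constant-matching (the $2$ in the denominator of the hypothesis against the $\topsigma^2/\botsigma^2$ coefficient of the lemma) and the conventions for $\topsigma,\botsigma$: in the theorem they should be read as the quantities $\max_{i\notin R}\sigma_i(A)$ and $\min_{i\in R}\sigma_i(A)$ as in the lemma's statement.
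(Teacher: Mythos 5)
Your proposal is correct and matches the paper's own route: the paper also obtains Theorem~\ref{theorem_epsilon_ur} as an immediate consequence of Lemma~\ref{lemma:urangle_bound}, via the remark that $\|V_R^TS\|_F^2\geq |R|-\epsilon'$ implies $\|CC^+U_R\|_F^2\geq \|V_R^TS\|_F^2-\frac{2\epsilon'\topsigma^2}{\botsigma^2}$, with $\epsilon'=\frac{\epsilon\botsigma^2}{2\topsigma^2}$. Your direct substitution even yields the slightly stronger constant $\epsilon/2$, since the paper's intermediate remark is loose by a factor of two.
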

Note that as an immediate corollary, we can replace
$\topsigma/\botsigma$ with the condition number of $A$. 

We remark briefly upon the insight revealed by this result. As
we descend into the depths of the singular spectrum, the leverage scores
provide an ever-weakening link between
columns and singular-vector subspaces. The extent of this decline is
quantified by the ratio $\topsigma/\botsigma$.

\iffalse
We obtain a quality guarantee in terms of the condition number of $A$.
\begin{corollary}
  Consider a matrix $A$ of rank $\rho$ and its singular value decomposition $A=U\Sigma
  V^T \in \mathbb R^{m\times n}$. Consider an arbitrary
  index set $R$ and define
  $\condn=\frac{\sigma_1(A)}{\sigma_\rho(A)}$.  Consider a column
  sampling matrix $S \in \mathbb R^{n\times r}$ satisfying
  $\|V_R^TS\|_F^2\geq |R|-\frac{\epsilon}{2\condn^{2}}$, and write $C=AS$.  
  Then 
  \[
  \|CC^+U_R\|_F^2 \geq \|V_R^TS\|_F^2 - \epsilon.
  \]  
\end{corollary}
\fi

\section{Applications}
\label{sec:applications}
\subsection{Column Subset Selection}
\label{section_cssp}

We apply our results to \gcssp (Problem~\ref{problem:gcssp}).
%This formulation generalizes the more commonly studied case where
%$A=B$~\cite{boutsidis2009improved,guruswami2012optimal,papailiopoulos2014provable,boutsidis2014near}. 
%Alternatively, one can consider the problem in minimization
%form, that is, to optimize $\|B-CC^+B\|_F^2$. While both formulations
%are equivalent in terms of optimal solutions, they are not when it
%comes to approximability. 
The only provable method we know for this formulation
was given by~\citet{bhaskara2016greedy}.

We propose Algorithm~\ref{cssp_algorithm} for Problem~\ref{problem:gcssp}, 
and show it enjoys approximation guarantees. 
The proof is in the Appendix.

\begin{theorem}
  Let $C=AS$, where $S$ is the matrix output by Algorithm~\ref{cssp_algorithm}. Then
  \[
  \|CC^+B\|_F^2 \geq (1-\epsilon)(1-\delta)\|B\|_F^2.
  \]
  \label{the:cssp}
\end{theorem}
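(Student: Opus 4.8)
The plan is to reduce the bound on $\|CC^+B\|_F^2$ to the subspace-angle guarantee of Theorem~\ref{theorem_epsilon_ur}, applied to the matrix $A$ with a suitable index set $R$ determined by Algorithm~\ref{cssp_algorithm}. First I would recall the high-level strategy behind the algorithm: we do not try to approximate $B$ directly through columns of $A$; instead, we choose a subset $R$ of singular vectors of $A$ such that the subspace $U_R$ already captures a large fraction of $B$ — concretely, $\|U_RU_R^TB\|_F^2 \geq (1-\delta)\|B\|_F^2$, which is always achievable by ordering singular directions of $A$ by how much of $B$ they explain (this is the ``readily accessible quantity'' the introduction alludes to, and the $\delta$ slack). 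The remaining task is to pick columns of $A$ whose span is close to $U_R$, and then argue that being close to $U_R$ transfers to approximating $B$.

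The key steps, in order. (1) Show that Algorithm~\ref{cssp_algorithm} returns a column sampling matrix $S$ with $\|V_R^TS\|_F^2 \geq |R| - \tfrac{\epsilon' \botsigma^2}{2\topsigma^2}$ for the relevant $\epsilon'$; this is the ``cover a $(1-\epsilon)$-factor of the generalized leverage scores'' step, i.e. the algorithm greedily accumulates columns until the sum of generalized leverage scores with respect to $R$ is within the prescribed tolerance of $|R|$. (2) Invoke Theorem~\ref{theorem_epsilon_ur} to conclude $\|CC^+U_R\|_F^2 \geq \|V_R^TS\|_F^2 - \epsilon' \geq |R| - \epsilon''$ for a combined tolerance $\epsilon''$. (3) Translate this into $\|(I - CC^+)U_R\|_F^2 \leq \epsilon''$, using $\|CC^+U_R\|_F^2 + \|(I-CC^+)U_R\|_F^2 = \|U_R\|_F^2 = |R|$ since $CC^+$ is an orthogonal projector and $U_R$ has orthonormal columns. (4) Chain two approximations:
\begin{align*}
\|(I-CC^+)B\|_F &\leq \|(I-CC^+)U_RU_R^TB\|_F + \|(I-CC^+)(I-U_RU_R^T)B\|_F \\
&\leq \|(I-CC^+)U_R\|_F \, \|U_R^TB\|_{\mathrm{op}} \;\text{-type bound} + \|(I-U_RU_R^T)B\|_F,
\end{align*}
where the first term is controlled by step (3) together with $\|U_R^TB\| \leq \|B\|$, and the second term is at most $\sqrt{\delta}\,\|B\|_F$ by the choice of $R$. (5) Square, collect constants, and calibrate the internal tolerances ($\epsilon'$, $\epsilon''$, $\delta$) so that $\|CC^+B\|_F^2 = \|B\|_F^2 - \|(I-CC^+)B\|_F^2 \geq (1-\epsilon)(1-\delta)\|B\|_F^2$.

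The main obstacle I anticipate is step (4): getting a clean multiplicative transfer from ``$C$ approximates the subspace $U_R$'' to ``$C$ approximates $B$'' without losing more than the stated $(1-\epsilon)(1-\delta)$ factor. The subtlety is that $\|(I-CC^+)U_RU_R^TB\|_F$ must be bounded by a product of a small Frobenius-norm factor (from $\|(I-CC^+)U_R\|_F^2 \leq \epsilon''$) and an operator-norm factor on $U_R^TB$, and one has to be careful that this cross term is genuinely $O(\sqrt{\epsilon''})\|B\|_F$ rather than $O(\sqrt{\epsilon'' |R|})\|B\|_F$; a submultiplicativity argument treating $U_R^TB$ column-by-column, or a direct trace manipulation $\|(I-CC^+)U_R U_R^T B\|_F^2 = \operatorname{tr}(B^T U_R U_R^T (I-CC^+) U_R U_R^T B) \leq \|(I-CC^+)U_R\|_{\mathrm{op}}^2 \|U_R^TB\|_F^2$, handles this, after which everything reduces to routine bookkeeping of the $\epsilon$ and $\delta$ budgets.
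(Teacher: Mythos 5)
Your proposal is correct in its essentials but follows a genuinely different route from the paper. The paper argues column-by-column on $B$: for each column $b_i$ it exhibits a unit vector $u_i$ in the range of $U_R$ with $(u_i^Tb_i)^2\geq(1-\delta_i)\|b_i\|_2^2$, a unit vector $w_i$ in the range of $C$ with $(w_i^Tu_i)^2\geq 1-\epsilon^2/4$ (extracted from $\|CC^+U_R\|_F^2\geq|R|-\epsilon^2/4$, which comes from Theorem~\ref{theorem_epsilon_ur} exactly as in your step (2)), and chains the two through a scalar three-vector inequality cited from earlier work, $w^Tb\geq (u^Tw)(u^Tb)-\sqrt{(1-(u^Tw)^2)(\|b\|_2^2-(u^Tb)^2)}$, before summing $\|CC^+B\|_F^2\geq\sum_i(w_i^Tb_i)^2$. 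You instead work globally with the residual, splitting $(I-CC^+)B$ through the projector $U_RU_R^T$ and bounding the two pieces by $\|(I-CC^+)U_R\|_{\mathrm{op}}\,\|U_R^TB\|_F\leq\sqrt{\epsilon''}\,\|B\|_F$ and $\sqrt{\delta}\,\|B\|_F$; the cross-term worry you flag is the right one, and your operator-norm/trace fix is valid since $\|(I-CC^+)U_R\|_{\mathrm{op}}\leq\|(I-CC^+)U_R\|_F$. Your version is self-contained (no external inequality) and arguably cleaner; the paper's per-column budget $\delta_i$ is finer-grained in principle but is discarded at the step $\sqrt{\delta_i}\leq 1$, so both arguments end up absorbing an additive error --- of order $\epsilon$ in the paper, of order $\epsilon^2+\epsilon\sqrt{\delta}$ in yours --- into the multiplicative slack $\epsilon(1-\delta)$. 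Be aware that this final calibration, in your write-up the inequality $\epsilon''+2\sqrt{\epsilon''\delta}\leq\epsilon(1-\delta)$, only goes through for moderate $\delta$; the paper's own closing chain has the same restriction (and a questionable step when it pulls the additive $\tfrac{\epsilon}{2}\|B\|_F^2$ inside the $(1-\delta)$ factor), so this is not a defect specific to your approach, but you should state the requisite smallness condition on $\delta$ explicitly rather than leaving it as ``routine bookkeeping.''
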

\begin{algorithm}
    {\bf Input}: Target matrix $B \in \mathbb R^{m\times t}$, basis matrix
    $A=U\Sigma V^T$, $\epsilon,\delta \in \mathbb R$.
    
    \label{cssp:vectors}~1.~ Choose index set $R$ so that $\|U_R^TB\|_F^2 \geq (1-\delta)\|B\|_F^2$.
    
    \label{cssp:columns}~2.~ Output column-selection matrix $S$ based on generalized-leverage-scores ordering so that $\|V_R^TS\|_F^2 \geq |R|-\frac{\epsilon^2\botsigma^2}{8\topsigma^2}.$

    \caption{Generalized deterministic leverage score sampling for GCSS (Problem~\ref{problem:gcssp}).}
    \label{cssp_algorithm}
\end{algorithm}
This result is akin to that of~\citet{papailiopoulos2014provable}. 
In particular, it expands the guarantees to an arbitrary target matrix
$B$, as the result of~\citet{papailiopoulos2014provable} is limited to
the case $A=B$. Note that while their result is for the minimization
objective, one can obtain a bound for maximization by simple manipulations.

This result is also an alternative to that
of~\citet{bhaskara2016greedy}, which gives a
relative-error approximation to the optimum for the greedy algorithm. The number of
columns required is inversely proportional to the smallest singular value of the
optimal subset. Our result, on the contrary, does not rely on said
singular value, which is unknown and could be arbitrarily small, but on the easily computed decay of
the generalized leverage scores and the ratio of the chosen singular values. We will make this precise in Section~\ref{sec:number_of_columns}.

%This also
%illustrates the flexility of our approach. We can trade off between
%multiplicative error $\delta$ and leverage-score decay, by adequately
%choosing $R$.

\subsection{Sparse Canonical Correlation Analysis}
\label{sec:sparsecca}
%Survey: \cite{uurtio2017tutorial}.

Consider two matrices $A$ and $B$, whose rows correspond to mean-centered observations of
a collection of random variables. The problem of Canonical Correlation
Analysis~\cite{10.2307/2333955} is to find pairs of vectors, in the spaces spanned by the
columns of $A$ and $B$ respectively, that have maximal
correlation. Formally, the $i$-th canonical correlation between $A$
and $B$ can be defined as follows:
\begin{align*}
  \max_{u_i,v_i} & \quad u_i^TA^TBv_i
  \\ \text{such that} & \quad  \|Au_i\| = \|Bv_i\| = 1,
  \\ & \quad  u_i^Tu_j=v_i^Tv_j=0, \quad j=1,\dots, i-1.
\end{align*}

By decomposing $A=Q_A\Sigma_A V_A^T$ and $B=Q_B\Sigma_B V_B^T$, 
and optimizing over the vectors $\hat u=\Sigma_A V_A^Tu$ and $\hat v=\Sigma_B V_B^Tv$,
it is easy to see that computing
the singular value decomposition of $Q_A^TQ_B$ is equivalent to
finding the canonical correlations. In particular, these are given by
the resulting singular values, and are of course equal to the cosines of
the principal angles between $A$ and $B$.

\para{Sparsity.}
In high-dimensional settings, that is, when the matrices $A$ and $B$
have a large number of columns, one may be interested in knowing
whether a small number of variables account for a significant amount
of the canonical correlations. This can be accomplished by enforcing
sparsity into the vectors $u_i$ and $v_i$ when finding the $i$-th canonical
correlation. This is usually referred to as Sparse CCA.

% Alternatively, 
We can show that we can expand our results from Section~\ref{section_cssp}
to solve Sparse CCA with approximation guarantees. 
%The proposed method allows the
%user to control the quality of the output with input
%parameters. Alternatively, the sparsity of the solution can be
%controlled, and bounds on the canonical correlations can be obtained
%at any point of the execution.
The algorithm is similar to
Algorithm~\ref{cssp_algorithm}, as it can be interpreted as two-sided
column subset selection. We analyze the algorithm next.

\begin{algorithm}[t]
    {\bf Input}: 
    Matrices  $A \in \mathbb R^{m\times n}$, $B \in \mathbb R^{m\times t}$, $\epsilon,\delta \in \mathbb R$.
    \begin{enumerate}
    \item Orthonormalize $A=Q_AR_A$ and $B=Q_BR_B$, and\\ compute $q=\|Q_A^TQ_B\|_F^2$.
    \item Compute the SVD of $A=U\Sigma V^T$.
    \item Choose index set $R$ so that $\|U_R^TQ_B\|_F^2 \geq (1-\delta)q$.
    \item Find a column-selection matrix $S$ so that\\ $\|V_R^TS\|_F^2 \geq |R|-\frac{\epsilon^2\botsigma^2}{8\topsigma^2}$.
    \item Compute the SVD of $B=U'\Sigma'V'^T$ and orthonormalize $AS=Q_{AS}R_{AS}$.
    \item Compute $q'=\|Q_{AS}^TQ_B\|_F^2$.
    \item Choose index set $T$ so that $\|U_T'^TQ_{AS}\|_F^2 \geq (1-\delta)q'$.
    \item Find a column-selection matrix $S'$ so that\\ $\|V_T'^TS'\|_F^2 \geq |T|-\frac{\epsilon^2\botsigma'^2}{8\topsigma'^2}$.
    \item Output matrices $S$ and $S'$.
    \end{enumerate}    
    \caption{Generalized deterministic leverage score sampling for
      Sparse CCA.}
        \label{cca_algorithm}
\end{algorithm}

\para{Analysis.}
We first compute $\|Q_A^TQ_B\|_F^2$, which is the total sum of canonical correlations
between $A$ and $B$. Our goal is to choose column subsets from both
matrices so as to preserve this quantity as much as possible. We first
proceed as in Algorithm~\ref{cssp_algorithm}, choosing columns of $A$
to approximate~$B$. 
We find a subset $C=AS$ such that
\[
\|CC^+Q_B\|_F^2 = \|Q_{AS}^TQ_B\|_F^2 \geq (1-\epsilon)(1-\delta)\|Q_A^TQ_B\|_F^2.
\]
Next, we do the same, but picking columns from $B$ to ap\-prox\-i\-mate
$AS$. We obtain a subset $C'=BS'$ such that
\begin{align*}
  \|C'C'^+Q_{AS}\|_F^2 &\geq (1-\epsilon)(1-\delta) \|Q_{AS}^TQ_B\|_F^2
  \\ &\geq (1-\epsilon)^2(1-\delta)^2\|Q_A^TQ_B\|_F^2.
\end{align*}
We easily derive the following result.

\begin{theorem}
\label{theorem_sparsecca}
  Given two matrices $A$ and $B$ with respective orthonormal bases $Q_A$ and $Q_B$,
  Algorithm~\ref{cca_algorithm} outputs two column-sampling matrices
  $S$ and $S'$ such that if $W$ and $W'$ are orthonormal bases of $AS$ and $BS'$
  respectively, then
  \[
  \|W^TW'\|_F^2 \geq (1-\epsilon)^2(1-\delta)^2\|Q_A^TQ_B\|_F^2.
  \]
\end{theorem}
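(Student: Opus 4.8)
The plan is to obtain the bound by chaining two applications of the one-sided guarantee of Theorem~\ref{the:cssp}: the first selects columns of $A$ so that $AS$ approximates $Q_B$, and the second selects columns of $B$ so that $BS'$ approximates $AS$. Throughout I will write $C=AS$ with $W$ an orthonormal basis of its column space (so that $CC^+=WW^T$), and $C'=BS'$ with $W'$ an orthonormal basis of its column space; I will also keep the quantities $q=\|Q_A^TQ_B\|_F^2$ and $q'=\|Q_{AS}^TQ_B\|_F^2$ that the algorithm computes in steps~1 and~6.

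First I would handle the $A$-side. The columns of $U$ (from the SVD of $A$ in step~2) and of $Q_A$ are both orthonormal bases of the column space of $A$, so by Property~\ref{proph_invariant} one has $\|U^TQ_B\|_F^2=\|Q_A^TQ_B\|_F^2=q$. Consequently steps~3--4 are exactly an instance of Algorithm~\ref{cssp_algorithm} with basis matrix $A$ and target matrix $Q_B$, the only change being that the $(1-\delta)$ threshold in step~3 is taken relative to $q=\|U^TQ_B\|_F^2$ rather than $\|Q_B\|_F^2$ --- which is the natural reference quantity here, since $Q_B$ need not lie in the column space of $A$. Running the argument behind Theorem~\ref{the:cssp} with this substitution (step~4 supplies precisely the leverage-score slack that its proof feeds into Theorem~\ref{theorem_epsilon_ur}) should give
\[
\|CC^+Q_B\|_F^2 \;\geq\; (1-\epsilon)(1-\delta)\,q .
\]
Because $W$ has orthonormal columns and spans the column space of $C$, we have $\|CC^+Q_B\|_F^2=\|W^TQ_B\|_F^2=\|Q_{AS}^TQ_B\|_F^2=q'$, so this step yields $q'\geq(1-\epsilon)(1-\delta)q$.

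Next I would repeat the argument on the $B$-side. The columns of $U'$ (from the SVD of $B$ in step~5) and of $Q_B$ span the column space of $B$, hence $\|U'^TQ_{AS}\|_F^2=\|Q_B^TQ_{AS}\|_F^2=q'$, and steps~7--8 are an instance of Algorithm~\ref{cssp_algorithm} with basis matrix $B$, target matrix $Q_{AS}$, and threshold relative to $q'$. The same reasoning gives $\|C'C'^+Q_{AS}\|_F^2\geq(1-\epsilon)(1-\delta)\,q'$. Since $W$ and $Q_{AS}$ are two orthonormal bases of the same space we may write $Q_{AS}=WO$ with $O$ orthogonal, and as $C'C'^+=W'W'^T$, invariance of the Frobenius norm under orthogonal multiplication gives $\|C'C'^+Q_{AS}\|_F^2=\|W'^TW\|_F^2=\|W^TW'\|_F^2$. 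Combining the two bounds,
\[
\|W^TW'\|_F^2 \;\geq\; (1-\epsilon)(1-\delta)\,q' \;\geq\; (1-\epsilon)^2(1-\delta)^2\,q \;=\; (1-\epsilon)^2(1-\delta)^2\|Q_A^TQ_B\|_F^2 ,
\]
which is exactly the statement.

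The step I expect to require the most care is the first displayed inequality: one must check that the proof of Theorem~\ref{the:cssp} still goes through when the target matrix does not lie in the column space of the basis matrix. This should be harmless, because that proof only uses the target through an inequality of the form $\|U_R^T(\cdot)\|_F^2\geq(1-\delta)\|U^T(\cdot)\|_F^2$ together with the generalized-leverage-score guarantee of Theorem~\ref{theorem_epsilon_ur}; replacing $\|U^T(\cdot)\|_F^2$ by the computable quantities $q$ and $q'$ is precisely what steps~3 and~7 do, so the conclusion survives with $q$ (resp.\ $q'$) playing the role of $\|B\|_F^2$. The remaining work is the orthogonal-invariance bookkeeping that identifies $\|CC^+Q_B\|_F^2$ with $q'$ and $\|C'C'^+Q_{AS}\|_F^2$ with $\|W^TW'\|_F^2$.
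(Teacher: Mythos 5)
Your proposal is correct and follows essentially the same route as the paper: both chain two applications of the one-sided guarantee of Theorem~\ref{the:cssp} (first selecting columns of $A$ against $Q_B$, then columns of $B$ against $Q_{AS}$), with the $(1-\delta)$ thresholds taken relative to the computable quantities $q$ and $q'$, and conclude by multiplying the two $(1-\epsilon)(1-\delta)$ factors. Your extra care about why the argument survives when the target does not lie in the basis matrix's column space (equivalently, one may replace $Q_B$ by its projection $UU^TQ_B$ without changing any of the relevant norms) is a detail the paper leaves implicit, but it does not change the approach.
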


\section{Complexity and Practical Aspects}
\label{sec:practical}

\para{Computational complexity.} As we have mentioned before, the running time of algorithms derived
from our approach is dominated by the cost of computing an SVD. Thus,
one can benefit from the extensive literature and off-the-shelf
software for computing SVD~\cite{golub1996matrix}. 
% In addition, 
If one is willing
to trade accuracy for speed, it is possible to employ randomized
methods. In particular, given an input matrix of size $m\times n$, it
is possible to compute an approximate 
truncated SVD of rank-$k$ in time
$\mathcal{O}(mnk)$~\cite{martinsson2020randomized}. For the applications we
consider here, the precise accuracy of the approximation is not
important, as long as the order of the generalized
leverage scores is preserved.
%  The only thing that matters is that the order of the generalized leverage scores is preserved.

\para{The effects of truncation.} The most straightforward way to
use the SVD efficiently is to truncate it, that is, to compute only
the leading $k$ singular vectors and values. Smaller values of $k$
will yield coarser approximations at improved speed and storage requisites. One
interesting aspect of our approach is that truncation does not
necessarily imply a loss of precision. In
Algorithm~\ref{cssp_algorithm}, for instance, note that only the
singular vectors up to $\botsigma$ are needed. The rest can be
discarded at no loss. In computationally constrained environments,
where computing a full SVD might be challenging, one can successively
compute more vectors, by increasing $k$, until $\|U_k^TB\|_F^2$
becomes large enough.

\para{The number of selected columns.}
\label{sec:number_of_columns}
\iffalse
As discussed in \cite{papailiopoulos2014provable}, one drawback of
deterministic rank-$k$ leverage-score based column selection is that the number
of columns required to achieve a guarantee depends on the decay of the
scores. In the worst case these can be uniform. Think of a matrix
whose $n$ columns are repetitions of the same vector. The rank-$1$
scores will all be $1/n$ and $\Omega(n)$ columns will be required to
cover a mass of at least $1-\epsilon$.
\fi
As shown in previous work~\cite{papailiopoulos2014provable},
when the rank-$k$ leverage scores follow a power-law decay,
$\mathrm{poly}(k,1/\epsilon)$ columns suffice to add up to $k-\epsilon$.

\iffalse
To conclude the case for their approach, the authors of
\cite{papailiopoulos2014provable} carry out an extensive evaluation of
real data sets, and show that their low-rank leverage scores do indeed
decay in a power-law fashion.
\fi

The result translates unchanged to
Theorem~\ref{theorem_angles_ls}, and applies to
Theorem~\ref{theorem_epsilon_ur} with small changes. To see this, it
is enough to observe that to obtain our
bound on the principal angles, we require
the generalized leverage scores to add up to
$|R|-{\epsilon\botsigma^2}/({2\topsigma^2})$. We apply the results of
Papailiopoulos directly.

In particular, if the generalized leverage scores  satisfy 
$\ell_i ={\ell_1}/{i^{(1+\eta)}}$, 
the number of required columns is
\[
c=\max\left \{ \left(\frac{4k\topsigma^2}{\epsilon\botsigma^2} \right
)^{\frac{1}{1+\eta}} \!\!-1,  ~ \left(\frac{4k\topsigma^2}{\epsilon\eta\botsigma^2} \right
)^{\frac{1}{\eta}} \!\! -1, \right \}.
\]

That is, the number of columns required to achieve
Theorem~\ref{the:cssp} for arbitrary constants $\epsilon, \delta$, in case of a power-law
leverage-score decay, is polynomial in ${k\topsigma^2}/{\botsigma^2}$.

Note that by replacing $\topsigma/\botsigma$ with the condition number of the input matrix
we obtain a result similar to one of~\citet{civril2012column}.

\section{Experimental Results}

\begin{figure}[t]
\setlength{\tabcolsep}{-2pt}
  \begin{tabular}{cc}
    \includegraphics[width=.25\textwidth]{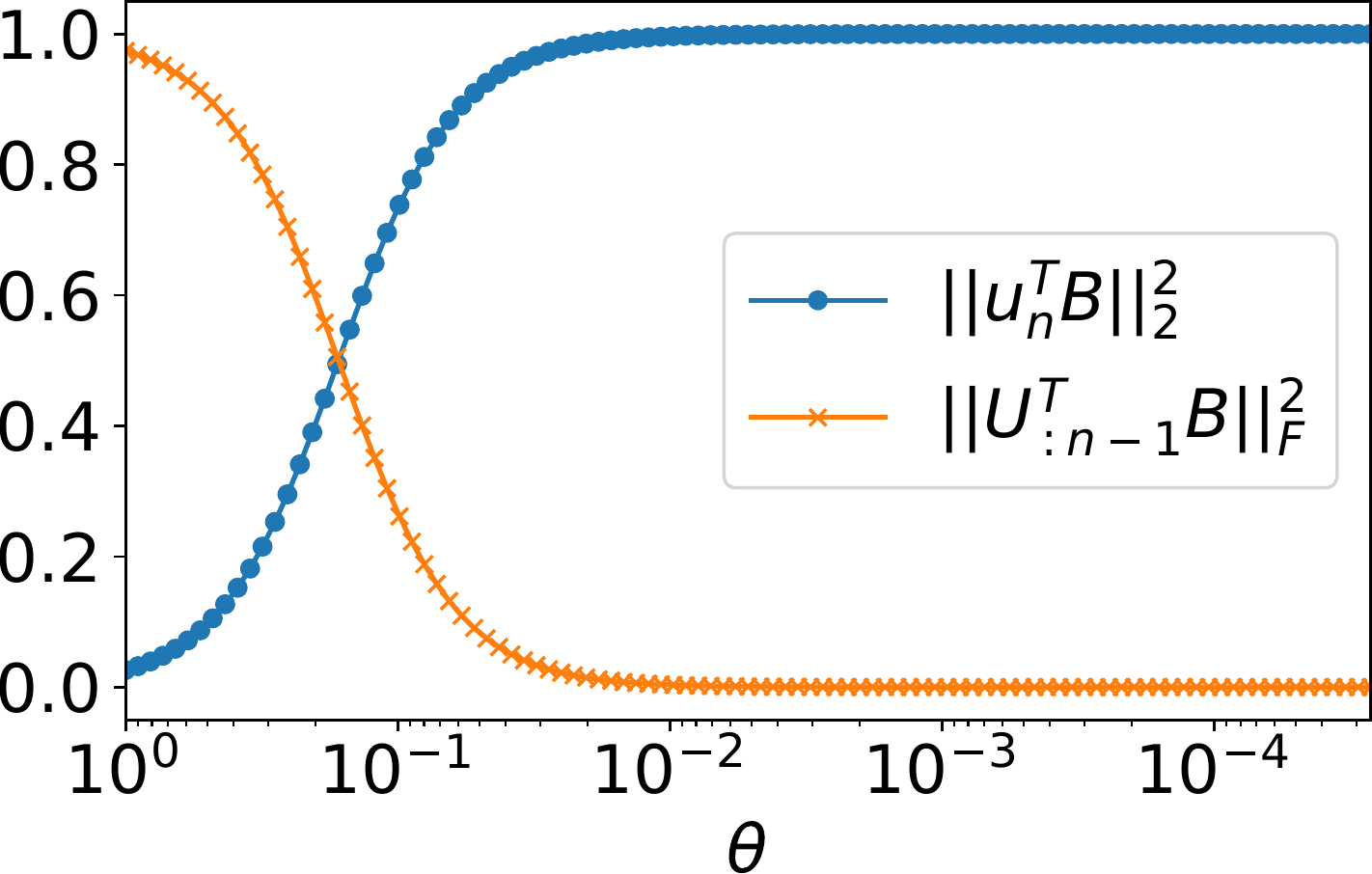} &        
    \includegraphics[width=.25\textwidth]{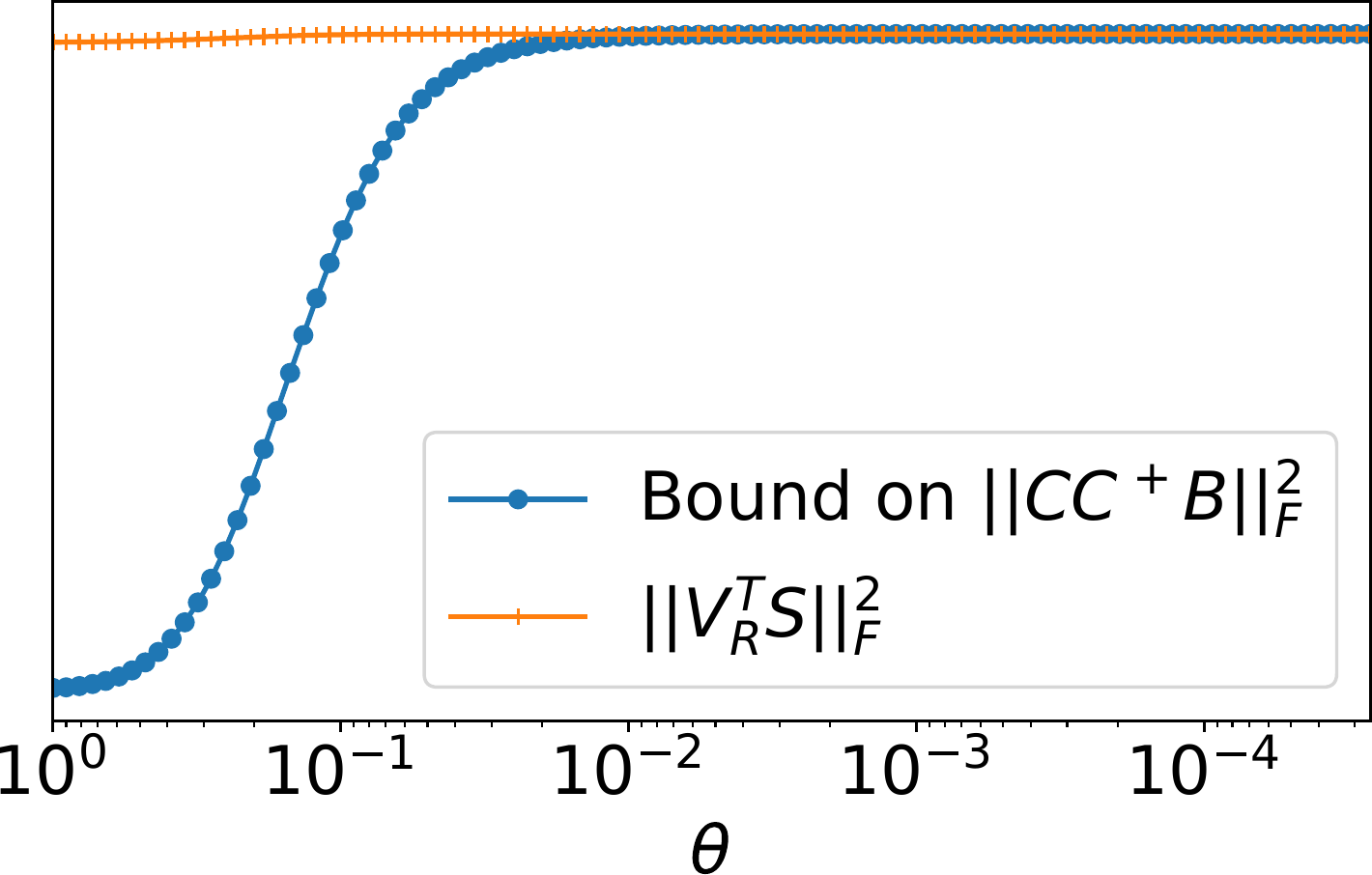}
  \end{tabular}
  
  \caption{Behavior of our method in an example of
    \citet{altschuler2016greedy}, as $\theta$ varies in [1,0). Left: the norm of $B$ when projected onto the space spanned by
      the last left singular vector and the rest. Right: Lower bound of Theorem~\ref{the:cssp}.}
    \label{fig:bhaskara}
\end{figure}

\begin{figure*}[t]
  \includegraphics[width=\textwidth]{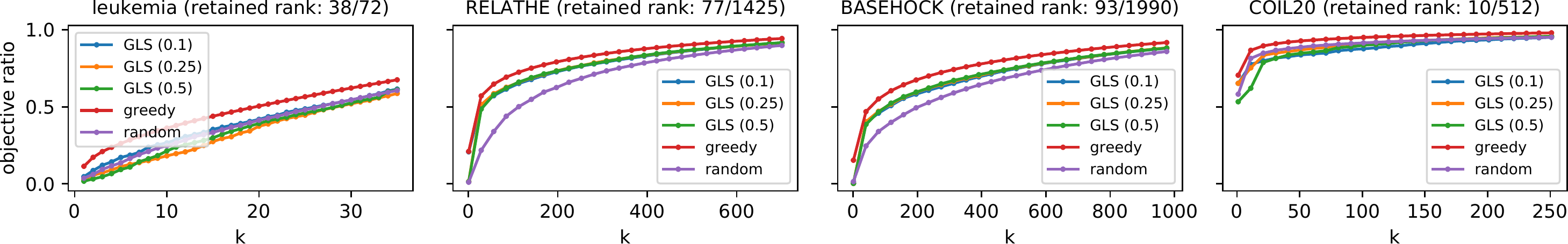} \\
  \includegraphics[width=\textwidth]{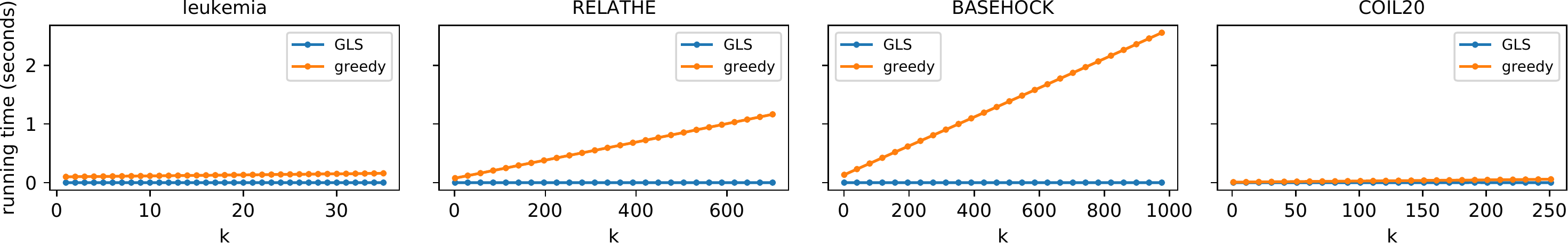}
  \caption{Performance analysis. \textit{Retained rank}: no. of singular values accounting for 75\% of $\|A\|_F^2$. \textit{Objective ratio}: $\frac{\|CC^+B\|_F^2}{\|AA^+B\|_F^2}$. 
%  \textcolor{red}{
%  1.~Can we give a name to our method, other than ``ours''? For instance, GLS.
%  2.~What is ``objective ratio'' in the first row? (normalize with what?)
%  3.~What is the unit of time in the second row? Seconds? Make it explicit.
%  }
  }
  \label{fig:performance}
\end{figure*}

We conduct experiments to gain further insight about our
results. Throughout this section, we focus on \gcssp and
Algorithm~\ref{cssp_algorithm}, which we will call \gls. 

A widely used algorithm in \gcssp literature is the \greedy algorithm, which iteratively selects the best column from $A$ to add to $C$, such that $\|CC^+B\|_F^2$ is maximized.
%\textcolor{red}{
%Write here a couple fo sentences about what is \greedy. 
%}
In practice, the \greedy algorithm can be implemented very efficiently
and often provides good results~\cite{farahat2011efficient}.
%Even better performance can be
%obtained using local search, if one is willing to sacrifice some
%speed~\cite{ordozgoiti2016fast}.
While we have observed superior
results from \greedy in terms of objective, our approach
does offer certain advantages. We present experimental results
to provide further insight on the behaviour of our algorithm, and to
help determine when it may be an alternative to \greedy.

\para{Can we outperform greedy?}
We consider an example by \citet{altschuler2016greedy}, where the
output of \greedy matches their quality guarantee.

Consider a set of orthogonal vectors $\{e_0, e_1, \dots, e_n\}$. We
build a matrix $A$ with columns $e_1, \theta e_0+ e_1$, and
$2\theta e_0 + e_j$ for $j\geq 2$. 
The target matrix $B$ is comprised only of  column~$e_0$. 
Even though $B$ can be expressed
using the first two columns of $A$, i.e., $e_1, \theta e_0+ e_1$, 
\greedy will pick $2\theta e_0 + e_{t+1}$ at iteration
$t$. To achieve a $(1-\epsilon)$-approximation of the optimum for
$k=2$, it will need more than $\frac{1}{2\theta^2\epsilon}$
columns. 

Does our approach fare any better? We analyze its behavior for varying
values of $\theta$, in an instantiation of the above example of size
$11\times 11$. In order to find the optimal
column subset, comprised of columns 1 and 2, our algorithm needs to
pick a singular vector subset $R$ such that the generalized leverage
scores are high for these two columns.

In Figure~\ref{fig:bhaskara} we plot key values for
$\theta\in(0,1]$. On the left, we show the norm of $B$ projected onto
  the last left singular vector $u_n$, as well as on the rest, $U_{:n-1}$. As $\theta$
  shrinks, the former rapidly approaches $1=\|B\|_F^2$, which
  indicates that $u_n$ is a good choice of $R$ (step~1 of Algorithm~\ref{cssp_algorithm}). 
  On the right
  we plot the sum of the generalized leverage scores for columns 1 and 2, $\|V_R^TS\|_F^2$, 
  and the lower bound on $\|CC^+B\|_F^2$, where $C=AS$, of
  Theorem~\ref{the:cssp}. The leverage scores associated to $u_n$
  always add up to almost 1. When $B$ projects well onto the space of
  $u_n$, the lower bound indicates that these two columns are a good
  choice. When $\theta$ is small, our algorithm will identify them for
  most values of $\epsilon,\delta$.

\para{Quality-efficiency trade-off.}
We evaluate the performance of Algorithm~\ref{cssp_algorithm} on a
collection of real datasets, obtained from a repository maintained by
Arizona State University for feature-selection
tasks.\footnote{\url{https://jundongl.github.io/scikit-feature/datasets.html}}
We split each dataset in two. The first half of the columns acts as
matrix $A$ (see def. of \gcssp), and the second as $B$.

We compare our method to
% the efficient
 \greedy
%by~\cite{farahat2011efficient}
 and a uniformly-at-random baseline (averaged over 100 iterations).

We run a variation on Algorithm~\ref{cssp_algorithm}. To avoid large
values of the ratio $\frac{\topsigma}{\botsigma}$, we 
only retain singular vectors paired with the leading
% top
 singular values
%(those
 accounting for at least 75\% of the total squared
norm.
%)
As per Algorithm~\ref{cssp_algorithm}, one would then build
the set $R$ based on the choice of $\delta$. 
%We have observed that said approach
which however can make it difficult to control the size of $R$, and
thus the decay of leverage scores. We thus try setting the size
of $R$ to a number of values: $1/10, 1/4$ and $1/2$  of the
retained rank. For each resulting $R$ we compute generalized
leverage scores. We use the generalized-leverage-score ordering 
to build a column submatrix $C_k$ of size $k$, for a
range of $k$ (depending on the dataset size). We measure both
$\|C_kC_k^+V\|_F^2$ and the running time of both algorithms (\greedy and ours). 

The results are shown in
Figure~\ref{fig:performance}. \greedy is clearly superior in terms of
objective function. However, our approach is more efficient for large
values of $k$, as it requires essentially constant computation time
with respect to this parameter. On some datasets, the leverage scores
are not informative and a random choice may perform better in
expectation. This is consistent with our results, as in the absence of
sharp decay, there are no guarantees for small column subsets (see
 Section~\ref{sec:practical}). Our approach seems to perform
better on high-dimensional, high-rank data.

\section{Related Work}

The concept of leverage scores can be traced back to \textit{statistical leverage}, 
long employed in statistics and the analysis of linear
regression~\cite{chatterjee1986influential}.  The statistical leverages
of a matrix, defined as the diagonal entries of the projector onto its
row space, determine the influence of each row on the solution of a
least-squares problem; hence ``leverage.'' It is not difficult to see
that the leverage scores of a matrix's columns correspond to the
statistical leverages of a related matrix; see Equation~(\ref{eq:vts_2}).

The idea of using leverage scores to find good column subsets has its
origins in randomized algorithms for fast matrix
multiplication and $\ell_2$
regression~\cite{drineas2006fast,drineas2006sampling}. These
methods sample rows or columns of the input matrices with
probability proportional to their norms, to then solve the given tasks on
the subsampled matrices with accuracy guarantees. It was later
observed that similar strategies could be employed to compute
 approximate matrix factorizations, such as
CUR~\cite{mahoney2009cur}. For this purpose, rows and
columns are sampled with probabilities proportional to their leverage
scores~\cite{drineas2008relative}. A similar approach yields
approximation algorithms for column subset
selection~\cite{boutsidis2009improved}. Leverage scores can also be used for the design of deterministic
algorithms~\cite{papailiopoulos2014provable}. The results
presented in our paper allows us to extend the approach of
\citet{papailiopoulos2014provable} to other problems, as we argue in Section~\ref{section_cssp}.
Superior algorithms were later proposed for column subset
selection~\cite{guruswami2012optimal,boutsidis2014near}. In addition, the greedy
algorithm has attracted attention of its
own~\cite{civril2012column,bhaskara2016greedy}, given its simplicity,
efficiency and practical effectiveness~\cite{farahat2011efficient}.

Recently, leverage scores have found applications in other areas, such as kernel
methods~\cite{scholkopf2002learning}. Despite their potential for
effective machine-learning algorithm design~\cite{cortes1995support},
kernels often require handling large matrices. For that reason,
sampling methods and approximate factorizations are often studied in
this context. Leverage-score sampling
has proven effective for this purpose~\cite{el2015fast,musco2016recursive,calandriello2017distributed,li2019towards,erdelyi2020fourier,liu2020randomb}. For
a comprehensive review of this and related methods, see the article of~\citet{liu2020random}.
For overviews on the use of leverage scores and on randomized
algorithms for matrix-related tasks, see the articles of~\citet{mahoney2011randomized}
and~\citet{martinsson2020randomized}.

%%%%%%%%%%%%%%%%%%%%%%%%%%%%%%%%%%%%%%%%%%%%%%%%%%
%\cite{rudi2018fast}. They consider leverage score sampling for
%positive semidefinite matrices, seemingly focused mostly on kernel
%matrices. They use a ridge-like version of the hat matrix for
%invertibility. They give an algorithm with guarantees and better
%running time than competitors. 
%
%\cite{ma2014statistical}. Bias and variance of uniform and leverage
%score sampling are analyzed in terms of bias and variance for least
%squares estimation.
%
%\cite{eftekhari2018mc2}. Leverage score sampling for entry-based
%matrix completion.
%
%\cite{holodnak2015conditioning}. They look at perturbation stability
%of leverage scores.

\section{Conclusions}
We have shown how the proposed \textit{generalized leverage scores},
along with our novel results, can be used to design approximation
algorithms for well-known problems. Our experimental results reveal
that the proposed methods have certain advantages over known
methods.
Given the fundamental nature of the concept of generalized leverage
scores and the numerous uses of standard leverage scores in the literature, 
we believe our
contributions are likely to be taken further by the community in
subsequent work. 

\medskip
\para{Acknowledgements.}
This research is supported by
Acade\-my of Finland projects 317085 and 325117,
the ERC Advanced Grant REBOUND (834862), 
the EC H2020 RIA project SoBigData++ (871042), and 
the Wallenberg AI, Autonomous Systems and Software Program (WASP) 
funded by the Knut and Alice Wallenberg Foundation.

\newpage
\bibliography{citations}
\bibliographystyle{icml2022}

%%%%%%%%%%%%%%%%%%%%%%%%%%%%%%%%%%%%%%%%%%%%%%%%%%%%%%%%%%%%%%%%%%%%%%%%%%%%%%%
%%%%%%%%%%%%%%%%%%%%%%%%%%%%%%%%%%%%%%%%%%%%%%%%%%%%%%%%%%%%%%%%%%%%%%%%%%%%%%%
% APPENDIX
%%%%%%%%%%%%%%%%%%%%%%%%%%%%%%%%%%%%%%%%%%%%%%%%%%%%%%%%%%%%%%%%%%%%%%%%%%%%%%%
%%%%%%%%%%%%%%%%%%%%%%%%%%%%%%%%%%%%%%%%%%%%%%%%%%%%%%%%%%%%%%%%%%%%%%%%%%%%%%%
\newpage
\appendix
\onecolumn
\section*{Appendix}
\subsection*{Proofs missing from the main text}

\begin{proof}[Proof of Theorem~\ref{the:cssp}]
%Assume for now that $B$, $W_C$ and $U_R$ have a single column. We have
%\[
%\|W_C^TB\|_F^2 \geq \left(\sqrt{(1-\epsilon)(\|B\|_F^2-\delta)} - \epsilon(\|B\|_F^2-\delta)\right)^2 = (\|B\|_F^2-\delta)(1-2\sqrt{\epsilon(1-\epsilon)}).
  %\]
%  
Throughout this proof, $\range(A)$ denotes the column space of a given matrix $A$.
  
We can write $\|U_R^TB\|_F^2 = \sum_i\|U_R^Tb_i\|_2^2 \geq
\sum_i(1-\delta_i)\|b_i\|_2^2$, for choices of $\delta_i$ such that
$\|U_R^Tb_i\|_2^2 \geq (1-\delta_i)\|b_i\|_2^2$, for $i=1,\dots,t$. This
implies that for every $i$ there is a unit vector $u \in \range(U_R)$
satisfying $(u^Tb_i)^2\geq (1 - \delta_i)\|b_i\|_2^2$.

On the other hand, Theorem~\ref{theorem_epsilon_ur} guarantees that
$\|CC^+U_R\|_F^2 \geq |R|-\epsilon^2/4$. This implies that for every
unit vector $u\in \range(U_R)$ there is a unit vector $w \in
\range(C)$ such that $(w^Tu)^2\geq 1-\epsilon^2/4$.

It can be shown~\cite{ordozgoiti2021insightful}
that for each $u \in \range(U_R)$, $w\in \range(C)$ and $b_i$, where
$u,w$ are unit~vectors:
\[
w^Tb_i \geq u^Tw u^Tb_i - \sqrt{\left(1-(u^Tw)^2\right)\left(\|b_i\|_2^2-(u^Tb_i)^2\right)}.
\]

We get $(u^Tw)^2 \geq 1-\epsilon^2/4$ and $(u^Tb_i)^2 \geq (1-\delta_i)\|b_i\|_2^2$.

Thus, for every $b_i$ we can choose $u,w$ so that
\begin{align*}
  (w^Tb_i)^2 &\geq \left(\sqrt{(1-\epsilon^2/4)(1-\delta_i)\|b_i\|_2^2} - \sqrt{\frac{\epsilon^2}{4}\delta_i\|b_i\|_2^2} \right)^2
%  \\ &  = (1-\epsilon^2/4)(1-\delta_i)\|b_i\|_2^2 + \frac{\epsilon^2}{4}\delta_i\|b_i\|_2^2 \\&\quad- 2\sqrt{(1-\epsilon^2/4)(1-\delta_i)\frac{\epsilon^2}{4}\delta_i\|b_i\|_2^4}
  \\ &  = (1-\epsilon^2/4)(1-\delta_i)\|b_i\|_2^2 + \frac{\epsilon^2}{4}\delta_i\|b_i\|_2^2 \\&\quad- 2\sqrt{(1-\epsilon^2/4)(1-\delta_i)\frac{\epsilon^2}{4}\delta_i}\|b_i\|_2^2
  \\ &  = (1-\epsilon^2/4)(1-\delta_i)\|b_i\|_2^2 \\&\quad- \sqrt{\frac{\epsilon^2}{4}\delta_i}\|b_i\|_2^2\left(2\sqrt{(1-\epsilon^2/4)(1-\delta_i)} - 1\right)
  \\ &  \geq (1-\epsilon^2/4)(1-\delta_i)\|b_i\|_2^2 - \sqrt{\frac{\epsilon^2}{4}\delta_i}\|b_i\|_2^2
  \\ &  \geq (1-\epsilon^2/4)(1-\delta_i)\|b_i\|_2^2 - \sqrt{\frac{\epsilon^2}{4}}\|b_i\|_2^2
  \\ &  = (1-\epsilon^2/4)(1-\delta_i)\|b_i\|_2^2 - \frac{\epsilon}{2}\|b_i\|_2^2
\end{align*}

It follows from the previous exposition that for adequate choices of $w_i$ in the column space of $C$,
\begin{align*}
  \|CC^+B\|_F^2 &= \sum_{i=1}^t (w_i^Tb_i)^2
  \\&\geq \sum_{i=1}^t (1-\epsilon^2/4)(1-\delta_i)\|b_i\|_2^2 - \frac{\epsilon}{2}\|b_i\|_2^2
  \\&= (1-\epsilon^2/4)(1-\delta)\|B\|_F^2 - \frac{\epsilon}{2}\|B\|_F^2.
\end{align*}
Finally,
\begin{align*}
  (1- & \epsilon^2/4) (1-\delta)\|B\|_F^2 - \frac{\epsilon}{2}\|B\|_F^2
  \\ & = (1-\delta)\|B\|_F^2 - (1-\delta)\frac{\epsilon^2}{4}\|B\|_F^2 - \frac{\epsilon}{2}\|B\|_F^2
  \\ & \geq (1-\delta)\|B\|_F^2 - \frac{\epsilon}{2}\left(\frac{\epsilon}{2}\|B\|_F^2 +\|B\|_F^2\right)
  \\ & = (1-\delta)\|B\|_F^2 - \|B\|_F^2\frac{\epsilon}{2}\left(\frac{\epsilon}{2}+1\right)
  \\ & \geq (1-\delta)\|B\|_F^2\left(1- \frac{\epsilon}{2}\left(\frac{\epsilon}{2}+1\right)\right)
  \\ & \geq (1-\delta)\|B\|_F^2\left(1- 2\frac{\epsilon}{2}\right).
  \\ & = (1-\delta)\|B\|_F^2(1- \epsilon).
\end{align*}
\end{proof}

%\subsection*{Algorithm for \sparsecca}
%Here we provide the pseudocode for the algorithm described in Section~\ref{sec:sparsecca}.

%\input{related}
  
\subsection*{Quality-efficiency tradeoff results for the rest of the datasets}

In Figures~\ref{fig:objective ratio} and~\ref{fig:running times} we
plot the objective ratio and the running time for a larger number of
datasets. \greedy remains the better option if measured by the
objective function. Our algorithm, \gls, performs better on high rank data sets,
in line with previous observations. In some cases uniform sampling
performs better. This reinforces our previous observation that for some datasets, the
leverage scores computed the chosen set $R$ are not informative. We note
here that more careful tuning may have served to achieve better
results, and thus leave a more thorough empirical evaluation for
future work.

With respect to running time (figure~\ref{fig:running times}), the
results are in line with what we would expect. With respect to $k$, \greedy scales
linearly with $k$ but \gls remains essentially constant.  
%\textcolor{red}{A few sentences are needed here describing Figures \ref{fig:objective ratio} and \ref{fig:running times}}

\begin{figure*}
    \includegraphics[width=\textwidth]{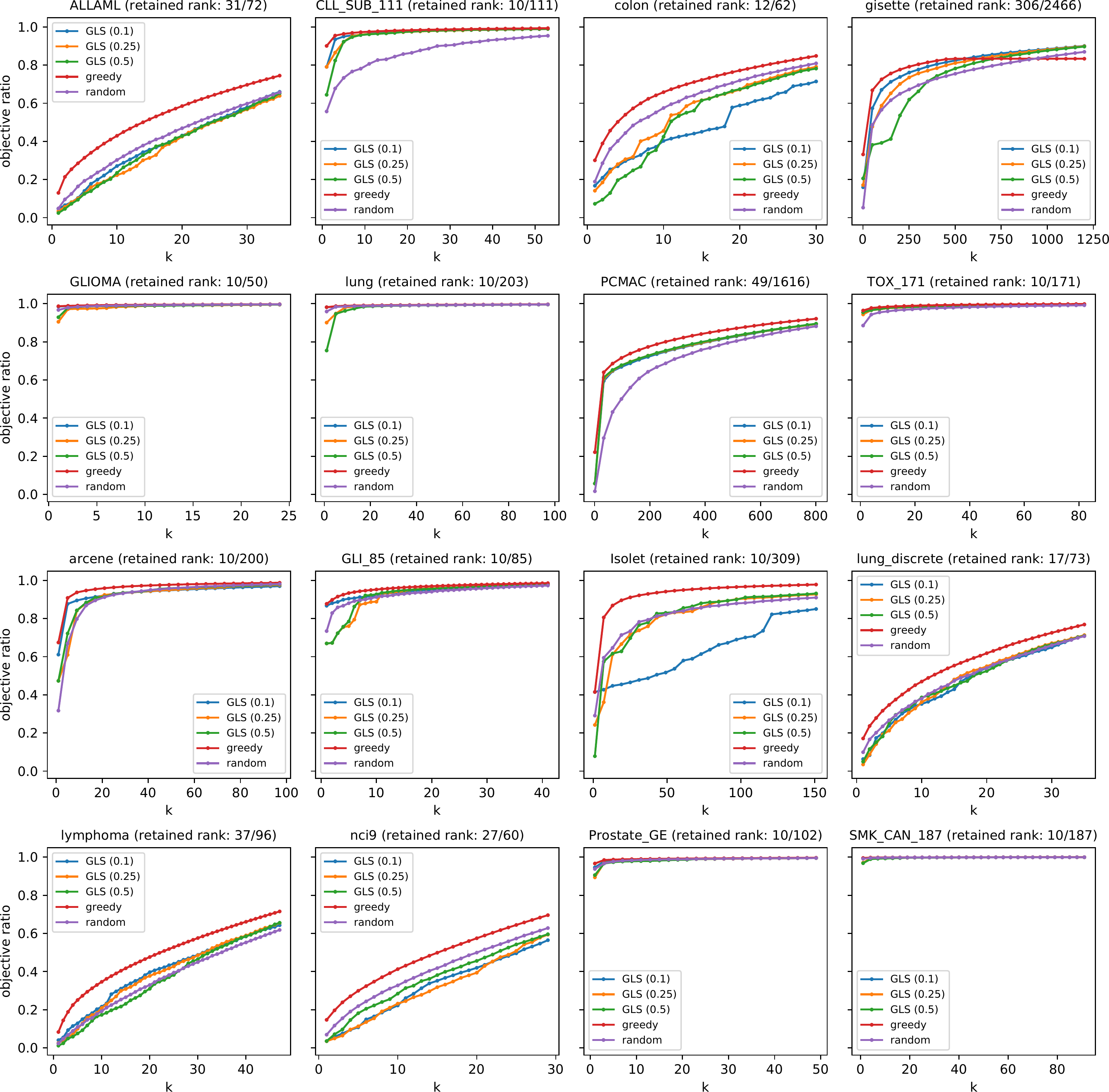}
	  \caption{Objective ratio for values of $k$ on the rest of the datasets}
  \label{fig:objective ratio}
\end{figure*}

\begin{figure*}
    \includegraphics[width=\textwidth]{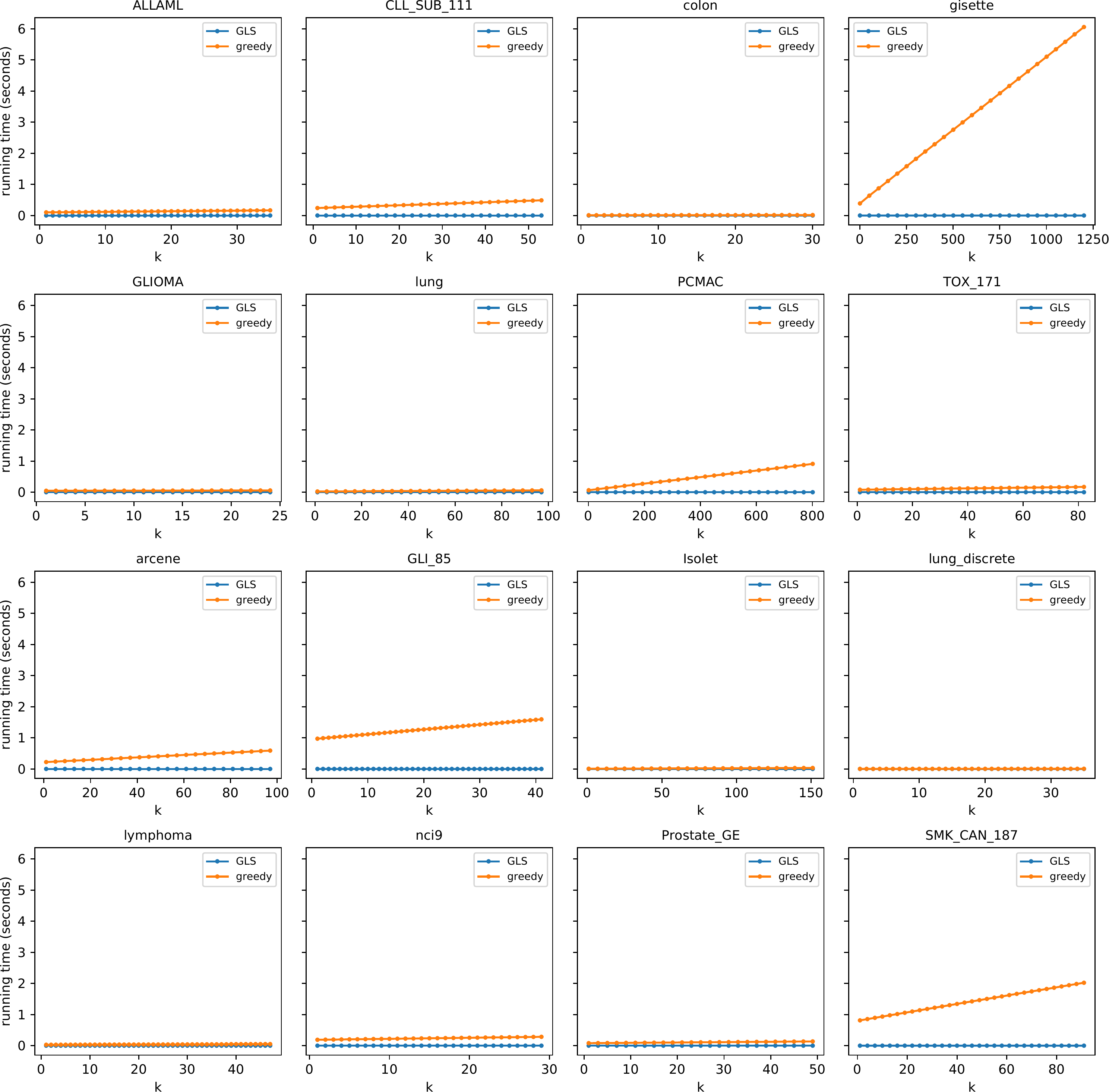}
	  \caption{Running times for values of $k$ on the rest of the datasets}
  \label{fig:running times}
\end{figure*}

%%%%%%%%%%%%%%%%%%%%%%%%%%%%%%%%%%%%%%%%%%%%%%%%%%%%%%%%%%%%%%%%%%%%%%%%%%%%%%%
%%%%%%%%%%%%%%%%%%%%%%%%%%%%%%%%%%%%%%%%%%%%%%%%%%%%%%%%%%%%%%%%%%%%%%%%%%%%%%%

\end{document}

% This document was modified from the file originally made available by
% Pat Langley and Andrea Danyluk for ICML-2K. This version was created
% by Iain Murray in 2018, and modified by Alexandre Bouchard in
% 2019 and 2021 and by Csaba Szepesvari, Gang Niu and Sivan Sabato in 2022. 
% Previous contributors include Dan Roy, Lise Getoor and Tobias
% Scheffer, which was slightly modified from the 2010 version by
% Thorsten Joachims & Johannes Fuernkranz, slightly modified from the
% 2009 version by Kiri Wagstaff and Sam Roweis's 2008 version, which is
% slightly modified from Prasad Tadepalli's 2007 version which is a
% lightly changed version of the previous year's version by Andrew
% Moore, which was in turn edited from those of Kristian Kersting and
% Codrina Lauth. Alex Smola contributed to the algorithmic style files.